\def\eqref#1{equation~\ref{#1}}
\def\1{\bm{1}}
\DeclareMathAlphabet{\mathsfit}{\encodingdefault}{\sfdefault}{m}{sl}
\SetMathAlphabet{\mathsfit}{bold}{\encodingdefault}{\sfdefault}{bx}{n}
\def\sS{{\mathbb{S}}}
\newcommand{\E}{\mathbb{E}}
\newcommand{\Var}{\mathrm{Var}}
\pgfplotsset{compat=1.18}
\def\FileTitle{Testing Most Influential Sets}
\def\FileSubject{Research paper on testing the influence of selected samples on estimates of interest}
\def\FileKeywords{
    attribution, least squares, optimization, compression, robustness auditing, algorithmic fairness, causal inference, statistics, machine learning
}
\newtheorem{proposition}{Proposition}
\newtheorem{corollary}{Corollary}
\newtheorem{theorem}{Theorem}%[section]
\newtheorem{lemma}{Lemma}%[section]
\newtheorem*{theorem*}{Theorem}
\newtheorem*{proposition*}{Proposition}
\newtheorem*{corollary*}{Corollary}
\newtheorem*{definition}{Definition}
\newtheorem*{lemma*}{Lemma}
\title{\FileTitle}
\author{%
  Lucas D.\ Konrad\thanks{Authors are listed in alphabetical order and contributed equally.} \\
  Vienna University of Economics and Business \\
  1020 Vienna, Austria \\
  \texttt{lucas.konrad@wu.ac.at} \\
  \And
  Nikolas Kuschnig\footnotemark[1] \\
  Monash University \\
  3145 Caulfield, Australia \\
  \texttt{nikolas.kuschnig@monash.edu} \\
}
\begin{document}

\maketitle

\begin{abstract}
    Small influential data subsets can dramatically impact model conclusions, with a few data points overturning key findings. While recent work identifies these \emph{most influential sets}, there is no formal way to tell when maximum influence is excessive rather than expected under natural random sampling variation.
    We address this gap by developing a principled framework for most influential sets. Focusing on linear least-squares, we derive a convenient exact influence formula and identify the extreme value distributions of maximal influence --- the heavy-tailed Fréchet for constant-size sets and heavy-tailed data, and the well-behaved Gumbel for growing sets or light tails.
    This allows us to conduct rigorous hypothesis tests for excessive influence. We demonstrate through applications across economics, biology, and machine learning benchmarks, resolving contested findings and replacing ad-hoc heuristics with rigorous inference.
\end{abstract}

\section{Introduction}

Machine learning (ML) models and statistical inferences can be highly sensitive to small subsets of data. In many applications, just a handful of samples can overturn key conclusions: two countries nullify the estimated effect of geography on development \citep{kuschnig_hidden_2021}, a single outlier flips the sign of a treatment effect \citep{broderick_automatic_2021}, or a small group of individuals drives disparate outcomes in algorithmic decision-making \citep{black_leave-one-out_2021}.
These \emph{most influential sets} --- data subsets with the greatest influence on model predictions --- are central to questions of interprtability, fairness, and robustness in modern machine learning \citep[see, e.g.,][]{black_leave-one-out_2021, chen_why_2018, chhabra_what_2023, ghorbani_data_2019, sattigeri_fair_2022}.

Despite their practical importance, practitioners lack principled tools to assess whether a set's influence is genuinely problematic. Current practice relies on heuristics, ad-hoc sensitivity checks, and domain expertise, while approximate methods such as influence functions \citep{koh_understanding_2017, fisher2023Influence, schioppa_theoretical_2023} systematically underestimate the impacts of sets and extreme cases \citep{basu_second-order_2020, koh_accuracy_2019}.
Recent work highlights both the promises and challenges of most influential subsets --- small sets can drive results even in randomized trials \citep{broderick_automatic_2021, kuschnig_hidden_2021}, heuristic algorithms can fail in simple settings \citep{hu2024Most, huang_approximations_2025}, and influence bounds remain an active area of research \citep{moitra_provably_2023, freund_towards_2023, rubinstein_robustness_2024}.
What remains missing is a principled method to distinguish natural sampling variation from genuinely excessive influence.

We develop a statistical framework for assessing the significance of most influential sets.
By focusing on linear regression --- a tractable, interpretable, and widely-used setting that underlies many modern methods --- we derive the exact asymptotic distributions of maximal influence.
We show that two distinct regimes emerge depending on the size of the influential set: when the size is fixed, maximal influence converges to a heavy-tailed Fréchet distribution; when the size grows with the sample, maximal influence converges to a well-behaved Gumbel distribution.
Our results enable principled hypothesis tests for excessive influence, replacing ad-hoc diagnostics with rigorous statistical procedures. We demonstrate their practical value via applications across economics, biology, and machine learning benchmarks, resolving ambiguous cases where influential sets drive contested findings.

\paragraph{Contributions.}
We present a comprehensive analysis of the influence of most influential sets, both theoretically and in practical applications. Our main contributions are:
\begin{enumerate}
    \item \textbf{Theoretical foundations.} We derive distributions for the influence of most influential sets, establishing their extreme value behavior and enabling statistical testing.
    \item \textbf{Efficient implementation.} We provide a closed-form formula for evaluating set influence, making our approach practical for real-world applications.
    \item  \textbf{Empirical validation.} We demonstrate the utility of our framework across domains, resolving the contested `Blessing of Bad Geography' in economics, assessing robustness in biological data of sparrow morphology, and auditing fairness in ML benchmark datasets.
\end{enumerate}

To summarize --- we provide the first rigorous theoretical results that allow us to \emph{interpret influence}, and demonstrate practicality by resolving contested findings.

\paragraph{Outline.}
The remainder of the paper is structured as follows.
Section~\ref{sec:background} introduces the problem of most influential sets and formalizes the setting. Section~\ref{sec:approach} presents our theoretical results on the distribution of maximal influence. Section~\ref{sec:experiments} demonstrates the practical merits of our framework through simulations and empirical applications. Section~\ref{sec:discussion} discusses implications, limitations, and future directions, and Section~\ref{sec:conclusion} concludes.

\section{Preliminaries and Background} \label{sec:background}

Practitioners routinely encounter situations where small subsets of data points drive key conclusions. Consider the following scenarios:
\begin{itemize}
    \item \textbf{Scientific discovery:} Rugged terrain generally hinders economic development, but not in Africa. What if this striking result is driven by just two small island nations?
    \item \textbf{Fairness auditing:} An algorithmic decision-making system produces different outcomes for a protected group. What if the disparity can be explained by only a handful of data points?
    \item \textbf{Data cleaning:} A single influential point among a thousand samples flips a strong correlation to a null result. Should we trust the original finding or the one without the outlier?
    \item \textbf{Data preprocessing:} A microcredit experiment shows negligible outcome variations overall, except for a few outliers. How should we prepare and analyze the sample?
\end{itemize}
At the core of these examples lie \emph{most influential sets}, which exert disproportionate influence on an estimate or prediction. These sets are intuitive to interpret, directly tied to the quantity of interest, and provide a new dimension for assessing estimates by highlighting their support in the data.

What has been unclear, however, is \emph{how to interpret and deal with influential sets}. Existing influence methods quantify how much changes, but offer little guidance on interpretation.
Current practice relies heavily on domain expertise and ad-hoc rules, lacking a statistically rigorous framework for judging influence.
Heuristics, such as sign-flips or significance thresholds, can flag effects that are stable and miss genuinely problematic influence.
We address this gap by quantifying \emph{whether observed maximum influence is statistically compatible with natural sampling variation}.

\subsection{Formal Problem Statement}

We consider a supervised learning task with input space $\mathcal{X} \subset \mathbb{R}^P$ and target space $\mathcal{Y} \subset \mathbb{R}.$
The goal is to learn a function $f(\theta, \cdot) : \mathcal{X} \mapsto \mathcal{Y}$ parameterized by $\theta \in \mathbb{R}^Q.$
Given training data $\left\{\left( x_n, y_n \right)\right\}_{n = 1}^N$ and a loss function $\mathcal{L}\left(\cdot, \cdot\right),$ we learn parameters by solving
\[
    \hat\theta = \underset{\theta \in \mathbb{R}^Q}{\arg\min} % \frac{1}{N}
        \sum_{n = 1}^N \mathcal{L}\left( f(\theta, x_n), y_n \right).
\]
Let $\left[ N \right] = \left\{1, \dots, N\right\}$ and denote both an index set and its corresponding subsample as $\sS \subset \left[ N \right].$
For any subset $\sS,$ we use a subscript $\hat\theta_{-\sS}$ to denote an estimate $\hat\theta$ without $\sS,$ i.e.
\[
    \hat\theta_{-\sS} = \underset{\theta \in \mathbb{R}^Q}{\arg\min} % \frac{1}{N - \lvert\sS\rvert}
        \sum_{n \not\in \sS} \mathcal{L}\left(f( \theta, x_n), y_n \right).
\]

\begin{definition}[Most Influential Set] \label{def:mis}
    For a positive integer $k \ll N,$ the \emph{$k$-most influential set} is
    \[
        \sS_{k}^{\max} \coloneqq
            \underset{\sS \subset \left[ N \right], \lvert\sS\rvert = k}{\arg\max}
            \Delta\left( \sS; \phi \right),
    \]
    where $\Delta\left(\sS; \phi \right) = \phi(\hat\theta) - \phi(\hat\theta_{-\sS})$ is the \emph{influence} of subset $\sS$ on the scalar target function $\phi : \mathbb{R}^Q \mapsto \mathbb{R}.$
    We denote the maximum influence as $\Delta^{\max} = \Delta\left(\sS_k^{\max}; \phi \right).$
\end{definition}

\paragraph{Research Question.}
What is the probability distribution of $\Delta^{\max},$ and how can we distinguish excessive influence from natural sampling variation?

\subsection{Influence Functions vs.\ Exact Influence}
A common and related approach to study influence is via \emph{influence functions} \citep{fisher2023Influence, hu2024Most, koh_understanding_2017}. These are motivated by reweighing via the perturbation
\[
    \hat\theta(\epsilon; \sS) \coloneqq \underset{\theta \in \mathbb{R}^Q}{\arg\min} \frac{1}{N} \sum_{n=1}^N \mathcal{L}\left(f( \theta, x_n), y_n \right) + \epsilon \sum_{i \in \sS} \mathcal{L}\left(f( \theta, x_i), y_i \right).
\]
Setting $\epsilon = 0$ recovers $\hat\theta,$ while $\epsilon = -N^{-1}$ yields $\hat\theta_{-\sS}.$
The influence function is the first-order linear approximation at $\epsilon = 0.$
% \begin{equation*}
%     \mathcal{I}(\sS) \coloneqq \frac{\mathrm{d} \hat\theta(\epsilon; \sS)}{\mathrm{d}\epsilon} \bigg|_{\epsilon = 0},
% \end{equation*}
% yielding a first-order estimate of influence.
% \[
%     \Delta\left(\sS; \phi \right) \approx
%         -N^{-1} \frac{\mathrm{d}\phi\left(\hat\theta(\epsilon; \sS)\right)}{\mathrm{d}\epsilon} \bigg|_{\epsilon = 0}.
% \]

While influence functions are computationally convenient, they are unreliable even for simple models \citep{basu_second-order_2020, hu2024Most, huang_approximations_2025, koh_accuracy_2019}.
In particular, they systematically underestimate the impact of (a) \emph{sets} of data points and (b) highly \emph{influential} data points.
This occurs because the first-order approximation cannot reflect higher-order effects from the interplay between data points or differential leverage scores.

\paragraph{Exact Maximum Influence.}
We therefore derive an \emph{exact influence formula} (in Section~\ref{sec:approach}) and characterize the behavior of maximum influence (Section~\ref{subsec:ev}) to enable principled testing (Section~\ref{subsec:test}).
We focus on the tractable but ubiquitous linear setting, allowing us to accurately portray most influential sets, for which extreme behavior dominates and first-order approximations fail most dramatically.

\subsubsection{Extreme Value Theory}
Our goal is to characterize the behavior of $\Delta^{\max},$ the influence of the most influential set.
This quantity is defined through maximization over all possible subsets, and its distribution is governed by extreme value theory rather than classical asymptotics. See \autoref{fig:evd} for an illustration.
When taking maxima over random quantities, three possible limiting distributions can emerge \citep{fisher_tippett_limiting_1928, gnedenko_sur_1945}: the well-behaved Gumbel (Type I), the heavy-tailed Fréchet (Type II), and the bounded Weibull (Type III).

\begin{figure}[htpb]
    \centering
    \includegraphics[width=.495\linewidth]{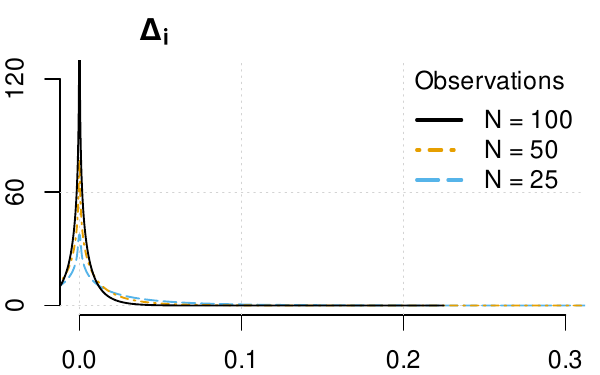}
    \includegraphics[width=.495\linewidth]{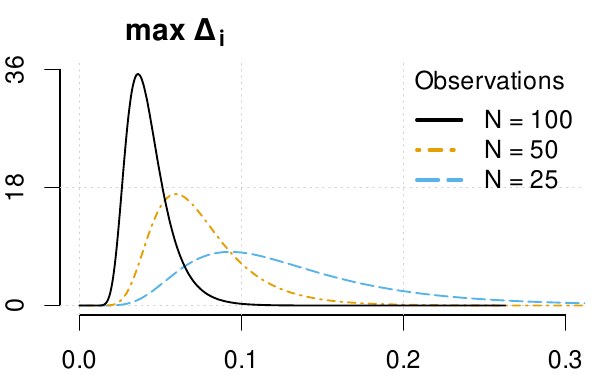}
    \caption{
        Illustration of the distribution of $\Delta(\{i\})$ for general observations, and the maximum influence $\Delta^{\max}$ over all possible $i,$ for $N \in \{25, 50, 100\}.$
        One can clearly discern an upward shift and a substantial increase of the density in the tails.
    } \label{fig:evd}
\end{figure}

We need to determine which extreme value distribution (EVD) attracts $\Delta^{\max}.$ Specifically, we distinguish between the Gumbel distribution with exponential tails, and the Fréchet distribution with polynomial tails allowing for potentially arbitrarily large influence. (The Weibull distribution can be ruled out since influence is unbounded.)
Once we establish the asymptotic distribution, we show how it applies to finite samples.

\subsection{Setting}

Consider the standard linear regression model, where $f$ is a linear function relating random features $X$ to the outcome $Y$ via the parameter vector $\theta$.
The ordinary least-squares (OLS) population estimator of $\theta$ is
\[
    \tilde{\theta} \coloneqq\E[XX']^{-1}\E[XY]
\]
yielding fitted values $\hat{Y}=X'\tilde{\theta}$ and the associated residuals $R= Y-X'\tilde{\theta}$. Stacking $N$ observed training samples yields the design matrix $\mathbf{X} \in \mathbb{R}^{N \times P}$ and outcome vector $\mathbf{y} \in \mathbb{R}^N.$
For OLS, we assume that $\mathbf{X}' \mathbf{X}$ is invertible and remains so after removing any subset.\footnote{
    This is not necessary for our results, as they extend to ridge regression, where the penalization parameter $\lambda > 0$ in $\mathbf{X}' \mathbf{X} + \lambda \mathbf{I}$ creates a ridge that guarantees invertibility.
}
The sample OLS estimator is then
\[
    \hat\theta \coloneqq \underset{\theta}{\arg\min} \lVert \mathbf{y} - \mathbf{X} \theta \rVert^2 = \left( \mathbf{X}' \mathbf{X} \right)^{-1} \mathbf{X}' \mathbf{y},
\]
yielding predictions $\hat{\mathbf{y}} = \mathbf{X} \hat\theta$ and residuals $\mathbf{r} = \mathbf{y} - \hat{\mathbf{y}}.$

For our main results, we consider least-squares estimates of the regression coefficients, with and without penalization.
For illustration, however, we will consider a univariate model with one positive coefficient of interest and target function $\phi(\theta) = \theta_1.$ 
This eases notation considerably and comes without loss of generality --- theoretical results apply to more general settings.\footnote{
    Multiple features can be factored out via the Frisch-Waugh-Lovell theorem under mild assumptions, leaving an equivalent univariate regression.
    A different sign can be accommodated by simply sign-flipping the feature of interest or adjusting the target function $\phi$ accordingly.
}

\section{Proposed Approach} \label{sec:approach}

The influence of a single observation $i$ is well-known \citep{belsley_regression_1980, cook1979Influential, walker_influence_1988} to be
\begin{equation*}
    \Delta\left( \{i\} \right) = \frac{
        \left(\mathbf{X}' \mathbf{X}\right)^{-1} \mathbf{x}_i r_i
    }{
        1 - h_i
    },
\end{equation*}
where $h_i$ is the leverage score and $r_i$ the residual of observation $i.$

We extend this result to be (i) applicable to sets of observations, and (ii) analytically convenient.
\begin{proposition}\label{prop:infl_sets}
    The influence of some set $\sS$ on the least-squares estimator $\hat\theta$ is
        \begin{equation}
        \Delta\left(\sS\right) = 
            \left( \mathbf{X}_{-\sS}' \mathbf{X}_{-\sS} + \lambda\mathbf{I}_P \right)^{-1} \mathbf{X}_{\sS}' \mathbf{r}_{\sS},
    \end{equation}
    where $\lambda \geqslant 0$ is an optional penalization parameter.
\end{proposition}
\begin{proof}
    Partition the full normal equations into $\sS$ and $-\sS,$ and subtract $\mathbf{X}_\sS' \mathbf{X}_\sS \hat\theta$:
    \begin{align*} 
        (\mathbf{X}^\top \mathbf{X} + \lambda \mathbf{I}) \, \hat\theta 
            &= \mathbf{X}' \mathbf{y} \\
        (\mathbf{X}_{-\sS}' \mathbf{X}_{-\sS} + \mathbf{X}_\sS' \mathbf{X}_\sS  + \lambda \mathbf{I}) \, \hat\theta 
            &= \mathbf{X}_{-\sS}' \mathbf{y}_{-\sS} + \mathbf{X}_{\sS}' \mathbf{y}_\sS \\
        (\mathbf{X}_{-\sS}' \mathbf{X}_{-\sS} + \lambda \mathbf{I}) \, \hat\theta 
            &= \mathbf{X}_{-\sS}' \mathbf{y}_{-\sS} + \mathbf{X}_\sS' (\mathbf{y}_\sS - \mathbf{X}_\sS\, \hat\theta).
    \end{align*}
    The key element stems from $\mathbf{r}_\sS = \mathbf{y}_\sS - \mathbf{X}_\sS\, \hat\theta,$ and the result follows from subtracting the leave-out normal equations and inverting:
    \begin{align*}
        (\mathbf{X}_{-\sS}' \mathbf{X}_{-\sS} + \lambda \mathbf{I}) \, \hat\theta  
            &= \mathbf{X}_{-\sS}' \mathbf{y}_{-\sS} + \mathbf{X}_\sS' \mathbf{r}_\sS \\
        (\mathbf{X}_{-\sS}' \mathbf{X}_{-\sS} + \lambda \mathbf{I}) \, \left( \hat\theta - \hat\theta_{-\sS} \right)
            &= \mathbf{X}_\sS' \mathbf{r}_\sS \\
        \Delta(\sS) 
            &= (\mathbf{X}_{-\sS}' \mathbf{X}_{-\sS} + \lambda \mathbf{I})^{-1} \mathbf{X}_\sS' \mathbf{r}_\sS.
    \end{align*}
    We thank an anonymous reviewer for pointing us towards this elegant proof.
\end{proof}

\autoref{prop:infl_sets} reveals the \emph{additive structure} of individual contributions in the numerator, as well as the \emph{multiplicative adjustment} from the denominator.
It provides an exact closed-form expression that avoids re-fitting the model for each candidate subset.

\subsection{Extreme Value Distributions} \label{subsec:ev}
We now turn to finding the distribution of $\Delta(\sS)$ for the \emph{most influential set}, $\sS_k^{\max}.$
Since this quantity is defined by an extremal operation (maximization over all possible subsets), its asymptotic behavior is governed by extreme value theory.
Specifically, we seek the limiting EVD $H$ such that $\Delta^{\max} \in \mathrm{MDA}\left( H \right),$ i.e., $\Delta^{\max}$ lies in the maximum domain of attraction of $H.$

Two canonical EVDs are of particular interest: the Fréchet (Type II) distribution $\Phi_{\alpha}$ for heavy-tailed variables, and the Gumbel (Type I) distribution $\Lambda$ for light-tailed variables.
We distinguish two practically relevant regimes based on how the subset size $k$ scales with sample size $N$:
\begin{enumerate}
    \item \textbf{Constant-size sets:} $k$ remains fixed as $N \to \infty.$
    % \item \textbf{Relative-size sets:} $k$ grows proportionally with $N,$ i.e., $k = pN$ for some $p \in (0, 1).$
    \item \textbf{Growing-size sets:} $k$ grows as $N \to \infty$ and $k/N \to 0$
\end{enumerate}
Both regimes have been considered in practical applications \citep[see, e.g,][]{broderick_automatic_2021, kuschnig_hidden_2021}, and --- as we will show next --- they yield fundamentally different asymptotic behavior with important implications for the interpretation of influence.

\subsubsection{Constant-size Sets}

\begin{theorem}[EVD for constant-size sets] \label{thm:stable_set_evd}
    Suppose $\mathbb{E}\left[X^2\right] < \infty,$ and that the heavier tail of $X_i, R_i$ decays at polynomial speed with coefficients $\xi_x, \xi_r$ where $\min\{\xi_x, \xi_r\}<\infty.$
    If $\lvert \sS_k^{\max} \rvert$ remains constant as $N \to \infty,$ then
    \[
        \lim_{N\to\infty} \Delta^{\max} \sim \text{Fréchet}(a, b, \xi),
    \]
    with location parameter $a,$ scale parameter $b,$ and shape parameter $\xi = \min\{ \xi_x, \xi_r \}.$
\end{theorem}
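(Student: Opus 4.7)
My plan is to reduce the theorem to a classical extreme-value computation for iid summands by exploiting the closed form of Proposition~\ref{prop:infl_sets},
\[
    \Delta(\sS) = \frac{\sum_{i \in \sS} X_i R_i}{\sum_{n \notin \sS} X_n^2}.
\]
The two pieces I would establish are (i) the denominator stabilizes at $N\,\mathbb{E}[X^2]$ uniformly over all $|\sS|\le k$, and (ii) the maximized numerator inherits a Fr\'echet-type limit from the heavy tails of the summands $V_i := X_i R_i$.

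For (i), the assumption $\mathbb{E}[X^2] < \infty$ forces $\xi_x > 2$, so $\max_n X_n^2 = O_P(N^{2/\xi_x}) = o_P(N)$, while the law of large numbers gives $N^{-1}\sum_n X_n^2 \to \mathbb{E}[X^2]$ almost surely. Because $k$ is held fixed, removing any $k$-subset perturbs the denominator only by a multiplicative factor $1 + o_P(1)$, uniformly in $\sS$. For (ii), note that $\max_{|\sS|\le k}\sum_{i\in\sS} V_i$ equals the sum of the top-$k$ \emph{positive} order statistics of the $V_i$, and under the heavy-tail hypothesis these top values are positive with probability tending to one. A Breiman-type product lemma shows that $V_i = X_i R_i$ has regularly varying tail with index $\xi = \min\{\xi_x, \xi_r\}$, which by Fisher--Tippett--Gnedenko places it in the Fr\'echet maximum domain of attraction $\mathrm{MDA}(\Phi_\xi)$. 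For fixed $k$, the joint convergence of the top-$k$ order statistics (normalized by $b_N \sim N^{1/\xi}$) to a Poisson-process functional yields a nondegenerate limit whose upper tail is again regularly varying with index $\xi$, i.e.\ of Fr\'echet type. Combining with (i) and absorbing constants into $(a,b)$ produces the claimed limit.

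The main obstacle will be controlling the gap between the fitted residuals $R_i$ and the unobserved errors $\epsilon_i$ uniformly at the extreme tail, because EVT reasoning requires iid summands. Writing $R_i = \epsilon_i - X_i(\hat\theta - \theta^\ast)$ with $\hat\theta - \theta^\ast = O_P(N^{-1/2})$, I would need $\max_i |X_i(\hat\theta - \theta^\ast)| = o_P(b_N)$, which reduces to $N^{-1/2}\max_i|X_i| = o_P(N^{1/\xi})$; this holds precisely because $\xi \le \xi_x$ and $\xi_x > 2$. Making this uniform enough that the identity of $\sS_k^{\max}$ is not distorted is the delicate step; the remaining ingredients (LLN for the denominator, Breiman's lemma for the product tail, and standard EVT for regularly varying maxima) are essentially classical.
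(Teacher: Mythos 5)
Your proposal is correct in outline and reaches the theorem by a genuinely different route than the paper. The paper keeps the denominator random: \autoref{lem:normality_inverse_sum_o_squares} shows via the delta method that $D^{-1}_{-\sS}$ is asymptotically normal, hence in $\mathrm{MDA}(\Lambda)$, and the conclusion then comes from the product lemma \autoref{lem:evd_gumbel_x_frechet} (proved by an integral-splitting and Potter-bound argument), which says that multiplying a Fréchet-MDA numerator by a Gumbel-MDA factor preserves the Fréchet index. You instead collapse the denominator to the deterministic scaling $N\,\E[X^2]$ by the law of large numbers, uniformly over subsets of fixed size $k$, and put all of the extreme-value content into the numerator, which you analyze explicitly as the sum of the top-$k$ positive order statistics of $V_i = X_i R_i$, applying a Breiman-type result at the level of the summands and the point-process limit for the top-$k$ values. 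Your route is more elementary on the denominator (LLN plus Slutsky rather than a delta-method normality lemma plus a product-MDA lemma) and more explicit about two points the paper glosses over: that the subset maximization reduces to top-$k$ positive order statistics, and that the fitted residuals differ from the errors by $X_i(\hat\theta - \theta^\ast) = O_P(N^{-1/2}\lvert X_i \rvert)$, which you correctly verify is negligible at the $b_N \sim N^{1/\xi}$ scale. What the paper's heavier machinery buys is reuse and robustness of interpretation: the same product lemma and its Gumbel corollary also drive \autoref{thm:asymp_set_evd} for relative-size sets, and keeping $D^{-1}_{-\sS}$ random is consistent with the finite-sample behavior documented in the simulations, where instability of $D^{-1}_{-\sS}$ slows convergence. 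One caveat applying equally to both arguments: for $k > 1$ the limit of the normalized top-$k$ sum is a Poisson-process functional whose upper tail is regularly varying with index $\xi$ but which is not exactly a Fréchet law, so the conclusion should be read (as the paper's own sketch implicitly does) as membership in the Fréchet maximum domain of attraction rather than an exact limiting distribution.
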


\begin{proof}[Proof sketch]
    Let $C \coloneqq \sum_{i\in\sS} X_i R_i $ and $D \coloneqq \sum_{n=1}^N X_n^2.$
    Notice that $C$ and $D^{-1}_{-\sS}$ are asymptotically independent.
    Since $X_i$ and $R_i$ have polynomial tails with coefficients $\xi_x, \xi_r,$ % by Lemma 1.3.1 in \citet{embrechts1997modelling}
        their product $C \in \mathrm{MDA}(\Phi_{\xi}),$ with $\xi = \min\{\xi_x, \xi_r\},$ since its upper tail behaves like the tail of $\max\{X_i R_i\}$ for $i \in \sS_k^{\max}.$
    \autoref{lem:normality_inverse_sum_o_squares} shows that the inverse sum $D^{-1}_{-\sS} \in \mathrm{MDA}(\Lambda)$, and the product $CD^{-1}_{-\sS}$ inherits the Fréchet behavior from $C$ by \autoref{lem:evd_gumbel_x_frechet}.
\end{proof}

This result shows that, for constant-size sets, $\Delta^{\max}$ inherits tail behavior from the heavier tail of $R$ and $X.$ If one of them is sufficiently heavy-tailed, even small sets can exert extreme influence with non-negligible probability.
\autoref{cor:frechet_gumbel} simplifies \autoref{thm:stable_set_evd} in absence of heavy tails.

\begin{corollary} \label{cor:frechet_gumbel}
    If the tail coefficients of both $X_{i}$ and $R_i$ are infinite, then
    \[
        \lim_{N\to\infty} \Delta^{\max} \sim \text{Gumbel}(a, b).
    \]
\end{corollary}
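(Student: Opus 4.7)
The plan is to parallel the proof of \autoref{thm:stable_set_evd}, replacing the Fréchet analysis of the numerator with a Gumbel analysis appropriate for super-polynomial tails, while reusing the denominator argument essentially unchanged. I would begin from the same decomposition $\Delta(\sS) = C / D_{-\sS}$ with $C = \sum_{i \in \sS} X_i R_i$ and $D_{-\sS} = \sum_{n \not\in \sS} X_n^2$, and retain the asymptotic independence of $C$ and $D_{-\sS}^{-1}$ established there.

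The first substantive step is to show $X_i R_i \in \mathrm{MDA}(\Lambda)$. Infinite tail coefficients $\xi_x = \xi_r = \infty$ mean that the survival functions of $X_i$ and $R_i$ decay faster than any polynomial, so both variables lie in $\mathrm{MDA}(\Lambda)$, and this closure property transfers to their product (for instance via the von Mises condition on the hazard rate, or by noting that $\log|X_i R_i|$ remains in the Gumbel domain). Since $\sS_k^{\max}$ over constant-size sets effectively selects the top-$k$ values of $X_i R_i$, the numerator $C$ is a sum of top-$k$ order statistics of an i.i.d.\ sequence in $\mathrm{MDA}(\Lambda)$; a standard extreme value argument then yields that $C$, appropriately centered and scaled, converges to a Gumbel distribution.

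The denominator is handled just as in \autoref{thm:stable_set_evd}: by \autoref{lem:normality_inverse_sum_o_squares} we already have $D_{-\sS}^{-1} \in \mathrm{MDA}(\Lambda)$, but more usefully, $\mathbb{E}[X^2] < \infty$ together with the law of large numbers gives $D_{-\sS} = N\,\mathbb{E}[X^2]\,(1 + o_p(1))$, so $D_{-\sS}^{-1}$ concentrates around a deterministic value at a rate much faster than the extremal fluctuations of $C$. Combining these pieces, $\Delta^{\max} = C / D_{-\sS}$ inherits the Gumbel limit of the numerator, with location and scale parameters $a, b$ determined by the normalizing sequences for $C$ together with the constant $N\,\mathbb{E}[X^2]$.

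The main obstacle will be adapting \autoref{lem:evd_gumbel_x_frechet} to a Gumbel-on-Gumbel regime: unlike the Fréchet $\times$ Gumbel setting, where the heavy-tailed factor dominates and the Gumbel factor acts as an effectively multiplicative constant, here both factors are light-tailed, so one cannot simply invoke tail dominance. The cleanest resolution is to exploit the sharper $\sqrt{N}$-rate CLT concentration of $D_{-\sS}^{-1}$ against the slow $\log N$-scale growth of $C$ in the Gumbel domain; a Slutsky-type argument then reduces the ratio to $C$ divided by a deterministic constant, which preserves the Gumbel limit and pins down the parameters $a$ and $b$.
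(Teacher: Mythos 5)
Your proposal is correct in substance but follows a genuinely different route from the paper. The paper disposes of this corollary by treating it as the degenerate boundary case of the Fréchet result: \autoref{cor:evd_gumbel_gumbel} in the appendix simply sets the tail coefficient $a=\infty$ in \autoref{lem:evd_gumbel_x_frechet} and asserts that the limit ``follows directly from properties of the Fréchet distribution,'' i.e.\ from the continuity of the GEV family as the shape $\xi = 1/a \to 0$. You instead re-run the argument of \autoref{thm:stable_set_evd} from scratch in the light-tailed regime: super-polynomial tails of $X_i$ and $R_i$ give super-polynomial tails for the product $X_iR_i$ (hence, under a von Mises-type condition, membership in $\mathrm{MDA}(\Lambda)$ for the top-$k$ sum $C$), while the denominator is handled not through \autoref{lem:evd_gumbel_x_frechet} but through the law-of-large-numbers concentration $D_{-\sS} = N\,\mathbb{E}[X^2](1+o_p(1))$ and a Slutsky step. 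This is a real difference, and arguably your version is the more defensible one: the proof of \autoref{lem:evd_gumbel_x_frechet} genuinely uses regular variation with finite index $a$ (the constant $C(a,S)=\int_0^\infty \mathbb{P}(S>u)u^{a-1}\,\mathrm{d}u$ and the slowly varying factor both lose meaning at $a=\infty$), so the paper's ``plug in $a=\infty$'' shortcut is a degenerate limit that is asserted rather than argued, whereas your Slutsky argument — $\sqrt{N}$-rate concentration of $D_{-\sS}^{-1}$ against the slowly growing extremal scale of $C$ — handles the Gumbel-on-Gumbel regime directly and also explains where the location and scale parameters come from. Two caveats, both shared with the paper's own level of rigor: closure of $\mathrm{MDA}(\Lambda)$ under products of independent variables is not automatic and needs the extra condition you gesture at, and for $k>1$ the sum of the top-$k$ order statistics is not exactly Gumbel in the limit, only Gumbel-tail-equivalent, which is all that the testing procedure actually requires.
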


% \subsubsection{Relative-size Sets}
\subsection{Growing Sets}
When the most influential set grows `slow enough' with the sample size, the central limit theorem (CLT) dominates the asymptotic behavior:

\begin{theorem}[EVD for growing sets] \label{thm:asymp_set_evd}
    If $\{X_n R_n\}_{n = 1}^N$ satisfies the conditions of a CLT and $\lvert \sS_k^{\max} \rvert$ grows at $o(N)$ but faster than $O(1),$ then
    \[
        \lim_{N\to\infty} \Delta^{\max} \sim \text{Gumbel}(a, b).
    \]
\end{theorem}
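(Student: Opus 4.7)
The strategy mirrors the proof of \autoref{thm:stable_set_evd}: via \autoref{prop:infl_sets}, decompose $\Delta^{\max} = C \cdot B^{-1}$ with $C \coloneqq \sum_{i \in \sS_k^{\max}} X_i R_i$ and $B \coloneqq \sum_{n \notin \sS_k^{\max}} X_n^2$, and then combine the limits of the two factors through an extreme-value arithmetic step. What changes in the relative-size regime is that $C$ is no longer a handful of heavy-tailed products but a genuine partial sum of $\Theta(N)$ terms, so the Fréchet mechanism that drove \autoref{thm:stable_set_evd} is unavailable. Under the CLT hypothesis on $\{X_n R_n\}$, the relevant extreme-value class shifts from Fréchet to Gumbel.

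The plan is in three steps. First, I would establish concentration of the denominator: since $|\sS_k^{\max}| = pN$, the LLN gives $B / [(1-p) N] \to \mathbb{E}[X^2]$ in probability, uniformly over subsets whose aggregate $X^2$ is not atypically large, so $B^{-1}$ contributes only a deterministic normalization. Second, I would characterize $\sS_k^{\max}$ via fractional programming (Dinkelbach): the optimum is the top-$pN$ set ranked by $X_i R_i + \Delta^{\max} X_i^2$, which, modulo a subdominant coupling correction, coincides with the top-$pN$ set ranked by $W_i \coloneqq X_i R_i$. This reduces $C$ to the sum of the top $pN$ order statistics of $\{W_1, \dots, W_N\}$, an $L$-statistic admitting the expansion
\[
    C = N \int_{1-p}^{1} F_W^{-1}(u) \, du + \sqrt{N} \, \mathcal{Z}_N,
\]
with $\mathcal{Z}_N$ asymptotically controlled by the CLT hypothesis. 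Third, I would apply an analog of \autoref{lem:evd_gumbel_x_frechet} for a Gumbel factor multiplied by a concentrated constant: combining the Gumbel-MDA behavior of (centered and scaled) $C$ with the deterministic $B^{-1}$ yields the claimed Gumbel limit, with location $a = \int_{1-p}^{1} F_W^{-1}(u) \, du \,/\, [(1-p)\, \mathbb{E}[X^2]]$ and scale $b$ determined by the asymptotic $L$-statistic spread.

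The main obstacle is the last ingredient of the third step: justifying a Gumbel rather than a plain Gaussian limit. The marginal CLT only gives Gaussianity of a single subset sum, and the Gumbel arises through the outer maximization over $\binom{N}{pN}$ strongly correlated candidates. I would handle this either by \emph{(i)} a Gaussian comparison argument (Sudakov--Fernique / Slepian inequalities) that reduces the maximum of the correlated subset sums to the maximum of a near-independent Gaussian family, which has the classical Gumbel EVT limit, or \emph{(ii)} a point-process representation in which the observations just above the threshold $F_W^{-1}(1-p)$ drive the extremal fluctuation and inherit Gumbel tails from the light tails of $W_i$. The remaining bookkeeping --- asymptotic independence of $C$ and $B$ as in the proof of \autoref{thm:stable_set_evd}, and control of the coupling through the random set $\sS_k^{\max}$ at $o_P(\sqrt{N})$ --- then closes the argument.
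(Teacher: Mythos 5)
Your Steps 1--2 are sound as far as they go, but they set up a conclusion that contradicts the one you need, and your Step 3 --- which you yourself flag as the main obstacle --- is exactly where the argument fails. Once you characterize $\sS_k^{\max}$ via the Dinkelbach ranking and expand $C$ as an $L$-statistic, the maximum over the $\binom{N}{pN}$ candidate subsets \emph{is} that single, data-determined trimmed sum: there is no residual ``outer maximization over many near-independent Gaussian candidates'' left to generate a Gumbel, and the value of the maximum has $\sqrt{N}$-Gaussian fluctuations around a deterministic constant. Neither of your proposed repairs fixes this: Slepian/Sudakov--Fernique comparisons control maxima of genuinely high-dimensional Gaussian families, whereas here the maximizer is pinned to the top-order-statistics set and the maximum concentrates; and the point process of exceedances over $F_W^{-1}(1-p)$ obeys a central limit theorem rather than an extremal limit, because that threshold sits at a fixed quantile, not in the far tail. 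A secondary problem is your claim that the coupling term is subdominant: since $\Delta^{\max}$ converges to a strictly positive constant, ranking by $X_iR_i + \Delta^{\max}X_i^2$ differs from ranking by $X_iR_i$ at first order, which changes the selected set and your proposed location constant $a$.

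The paper's own proof takes the same decomposition but a much more direct route that never characterizes the argmax. It argues that with $\lvert\sS_k^{\max}\rvert = pN$ the numerator $C$ is a sum of order $N$ terms, so by the CLT its scaled version is asymptotically normal and hence light-tailed, i.e.\ lies in $\mathrm{MDA}(\Lambda)$; \autoref{lem:normality_inverse_sum_o_squares} gives asymptotic normality of $D^{-1}_{-\sS}$, so that factor is light-tailed as well; and the product result (\autoref{lem:evd_gumbel_x_frechet} in the degenerate case $a=\infty$, i.e.\ \autoref{cor:evd_gumbel_gumbel}) places $\Delta(\sS)=C\,D^{-1}_{-\sS}$ in $\mathrm{MDA}(\Lambda)$, from which the Gumbel law for the maximal influence follows by classical extreme value theory for light-tailed variables. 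In short, the paper works at the level of tail classes of the subset-influence statistic and lets the Fisher--Tippett machinery produce the Gumbel, while your more explicit order-statistics construction, as currently laid out, leads to a Gaussian limit for $\Delta^{\max}$ and does not reach the stated conclusion.
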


% \begin{proof}[Proof sketch]
%     When $\lvert \sS_k^{\max} \rvert = pN,$ for $p \ll 1$, the numerator $C$ grows at the rate $\mathcal{O}(N).$ By the CLT, $(C - \E[C]) / \sqrt{N} \sim \mathcal{N}(\mu, \sigma^2)$ as $N \to \infty$. Hence, the product $CD^{-1}_{-\sS}$ lies in the maximum domain of attraction of the Gumbel distribution, following \autoref{lem:evd_gumbel_x_frechet} and \autoref{cor:evd_gumbel_gumbel}.
% \end{proof}

\begin{proof}[Proof sketch]
    Let $m_k = \lvert \sS_k^{\max} \rvert$. By assumption, $m_k \to \infty$ and $m_k = o(N)$ as $N \to \infty$.
    The numerator, $C,$ can be written as a partial sum of $m_k$ terms and grows at rate $\mathcal{O}(m_k)$.
    Since $\{X_n R_n\}_{n=1}^N$ satisfies the conditions of a CLT, we obtain
    $\left(C - \E[C]\right)/\sqrt{m_k} \;\overset{d}{\longrightarrow}\; \mathcal{N}(\mu, \sigma^2)$ 
    as $N \to \infty$. 
    Hence, following \autoref{lem:evd_gumbel_x_frechet} and \autoref{cor:evd_gumbel_gumbel}, the product $CD^{-1}_{-\sS}$ lies in the maximum domain of attraction of the Gumbel distribution.
\end{proof}

This reveals a fundamental distinction: constant-size sets are dominated by the heaviest tail, while, for growing sets, $\Delta^{\max}$ converges to a well-behaved Gumbel distribution with exponentially decaying tails. This result holds regardless of the underlying distributions of $X$ and $R$ as long as the variance of $X_i\cdot R_i$ is finite.

\subsection{Implementation} \label{subsec:test}
With the theoretical results established, we turn towards practical implementation.
Assuming there is a most influential set of interest,\footnote{
     This set can be obtained with any of the methods in the literature \citep{broderick_automatic_2021, kuschnig_hidden_2021, freund_towards_2023}, and could, e.g., be the smallest set that achieves a sign-flip --- a commonly considered heuristic cutoff.
     If no such heuristic is used and multiple tests are considered (e.g., for different coefficients or sets sizes), a multiple testing correction should be applied. Note that EVT controls the implicit search for the most influential set over the $\binom{N}{k}$ possible subsets.
}
our procedure follows three steps:

\begin{enumerate}
    \item \textbf{Choose the EVD family.}
    Our theoretical results guide the decision between the Gumbel and Fréchet families, which based on the hypothesized set size and the tail behavior of $X$ and $R$.
    For the latter, we can estimate tail coefficients using maximum likelihood estimation \citep[MLE; ][]{smith1985maximum, bucher2017maximum}.
    If $1 / \xi$ is sufficiently close to zero, we default to the Gumbel distribution (per \autoref{cor:frechet_gumbel} and \autoref{thm:asymp_set_evd}).
    Otherwise, we use the Fréchet distribution with shape parameter $\xi,$ following \autoref{thm:stable_set_evd}.\footnote{
        Small values of $\xi$ correspond to extremely heavy-tailed distributions where the variance ($\xi \leqslant 2$) or even the mean ($\xi \leqslant 1$) become infinite. Such cases pose practical challenges for statistical inference, and imply that arbitrarily large influence is possible in our case.
    }

    \item \textbf{Estimate EVD parameters.}
    With the EVD known, we estimate its location and scale parameters $a, b$ e.g., using the block maxima method \citep{coles2002Introduction, dehaan2006extreme}.
    For this, we divide the sample (excluding $\sS_k^{\max}$ for robustness) into $M$ blocks of size $N/M$, compute $\Delta^{\max}$ for each block, and use MLE based on these draws. Since selecting the maximum out of $N/M$ observations reduces the expected maximum compared to the full sample, a bias correction can be applied for the Gumbel distribution, leveraging that the densities for sizes $N$ and $N/M$ are related by
    \[
        F^N(x) \xrightarrow{d} \text{Gumbel}(a,b) \quad \text{and} \quad [F^{N/M}(x)]^M \xrightarrow{d} \text{Gumbel}(a,b),
    \]
    which yields the location correction $\tilde{a} = \hat{a} + b \log(M)$, where $\hat{a}$ is the MLE.

    \item \textbf{Perform hypothesis test.}
    Finally, we test the null hypothesis $H_0$ that the observed influence reflects natural sampling variation against the alternative $H_1$ of excessive influence.
    Based on the estimated parameters, we can simply compute the $p$-value as $P(\Delta^{\max} \geqslant \delta_{\text{obs}})$ where $\delta_{\text{obs}}$ is the observed maximum influence.
\end{enumerate}

\paragraph{Computation.}
Thanks to \autoref{prop:infl_sets}, our procedure is practical and computationally convenient, allowing for application to large and varied datasets.
While accurately estimating Extreme Value Distributions (EVDs) is a recognized and separate challenge in its own right, the maximum likelihood steps within our framework are simple and well-behaved, optimizing over only two parameters in the Gumbel case.
The primary computational constraint instead stems from finding most influential sets --- we need to approximate $\Delta^{\max}$ for the $M$ block maxima estimates. For computational tractability, we use an adaptive greedy algorithm \citep{hu2024Most, kuschnig_hidden_2021} with complexity $\mathcal{O}(Mk)$ and an efficient implementation based on our closed-form influence formula for sets.

\section{Experiments} \label{sec:experiments}

In this section, we validate our theoretical predictions, investigate convergence in small samples, and demonstrate their practical relevance and utility through real-world applications spanning economics, biology, and machine learning.

\subsection{Simulation Study}
We begin with a controlled setting, where we (i) illustrate, (ii) investigate convergence for small samples, and (iii) evaluate empirical estimation.

\paragraph{Illustration.}
\autoref{fig:illustration} illustrates our approach on a simple linear regression with one moderately influential point due to high leverage. Panel~A visualizes the data, significance thresholds (at the 10, 5, and 1\% significance levels) as a function of predictor and response values.
Panel~B presents the underlying extreme value analysis: block maxima inform the estimated Gumbel distribution, yielding a $p$-value of 0.04 for the observation of interest.

\begin{figure}[htbp]
    \centering
    \includegraphics[width=1\linewidth]{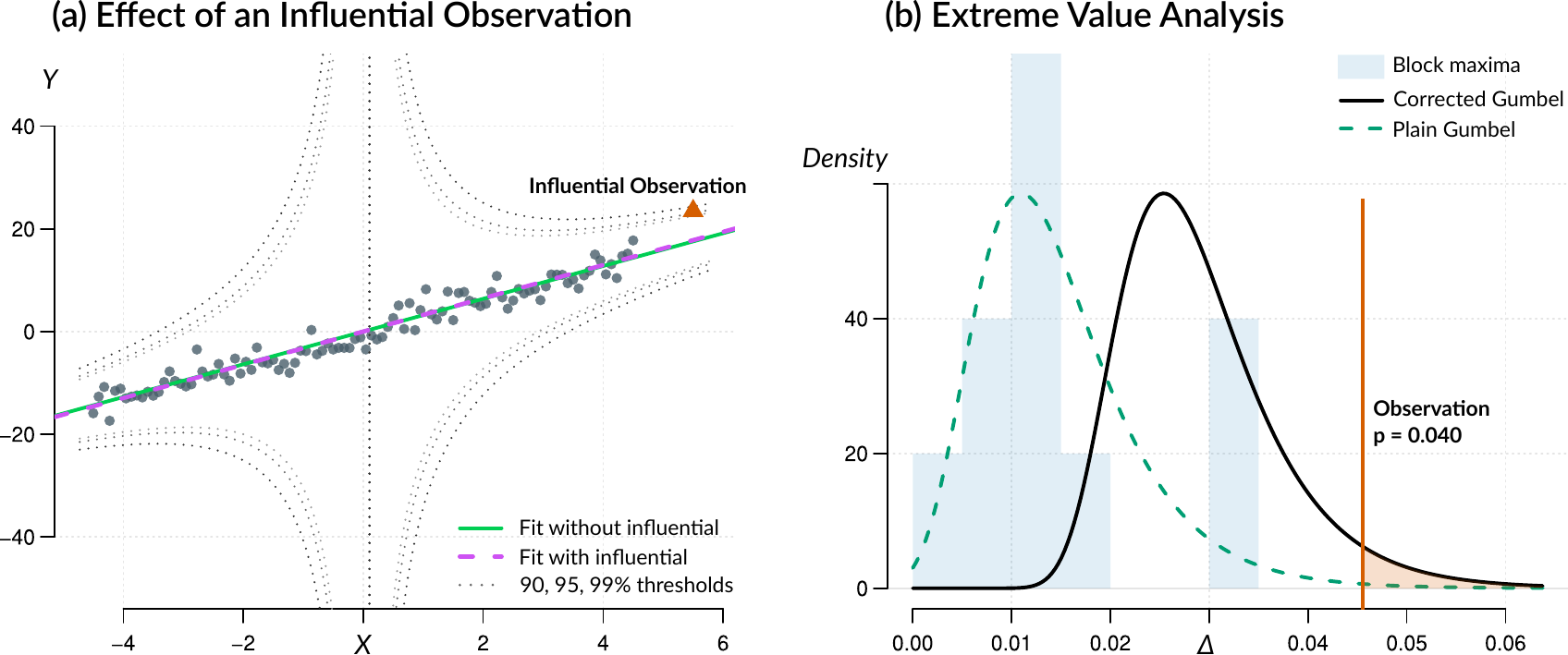}
    \caption{Illustration of our methodology on a simple linear regression with a moderately influential observation.
    Panel~A depicts observations, estimated regression lines with and without the influential point, and conditional significance regions at the 10, 5, and 1\% levels (dotted lines).
    Panel~B illustrates the extreme value analysis: a histogram of block maxima in the background, fitted Gumbel distributions with (solid) and without (dashed) bias correction, and the resulting $p$-value for the observation of interest.} \label{fig:illustration}
\end{figure}

\subsubsection{Convergence to Extreme Value Distributions}
Next, we verify that maximal influence converges to the predicted extreme value distributions.
We consider four scenarios based on combinations of the standard Normal and $t(5)$ distributions for $X, R.$ For each scenario, we simulate $1,000$ datasets of sizes between $N = 20$ and $N = 1000,$ and compare the empirical parameter estimates with the theoretical prediction.
Overall, we find rapid convergence, implying that our theoretical predictions are applicable with small samples.

\autoref{fig:shape_finite_sample_convergence} shows convergence of the scenarios to the predictions, which are indicated by dashed horizontal lines. (Details are provided in \autoref{tab:full_shape_simulation} of the Appendix.)
All scenarios reliably converge for moderate sample sizes.
The Normal-Normal scenario is insignificantly different from Gumbel behavior ($\xi^{-1} = 0$) for $N \geqslant 50,$ and the heavy-tailed cases also exhibit the predicted Fréchet behavior ($\xi^{-1} = 0.2$).
Notably, the $t(5)$--Normal case converges at slower rates, likely due to the relative instability of the inverse $\left( \mathbf{X}' \mathbf{X} \right)^{-1}_\sS$ in small samples.
Overall, the simulation results support the applicability of \autoref{thm:stable_set_evd} in small samples.

\begin{figure}[htpb]
    \centering
    \includegraphics[width=.8\linewidth]{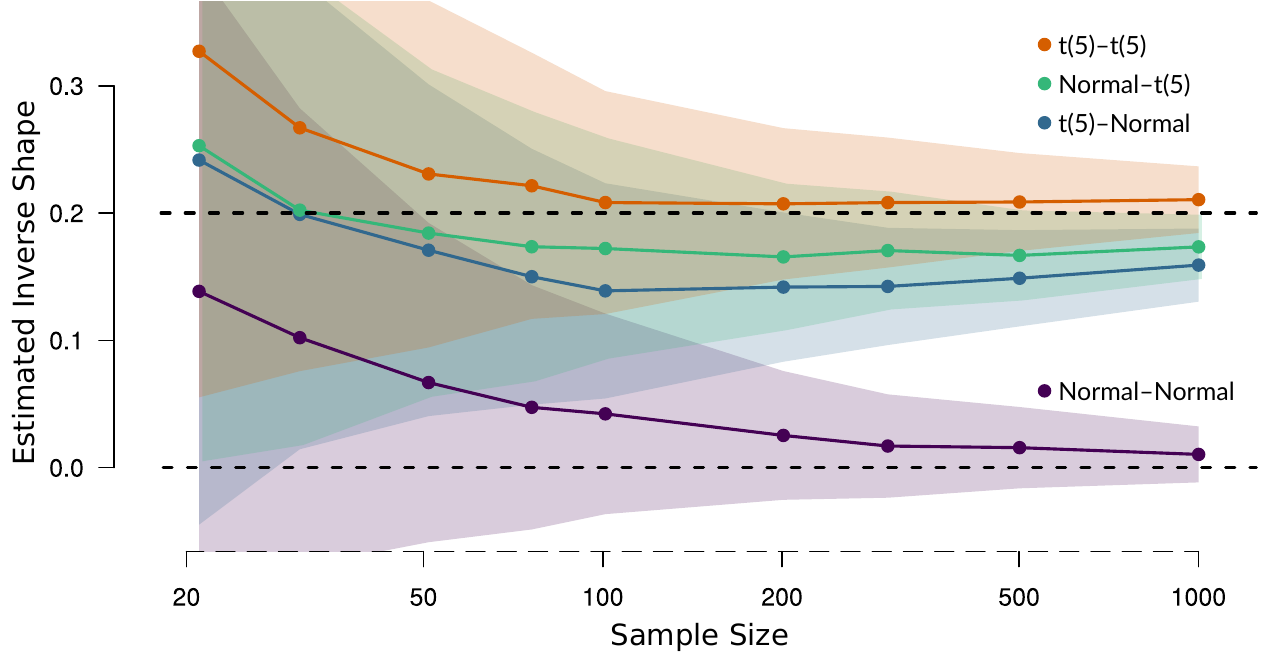}
    \caption{
        Convergence of empirical samples to the limiting distribution with increasing sample sizes ($N \in \{20, 30, 50, 75, 100, 150, 200, 300, 500, 1000\},$ note the non-linear scale) for four different scenarios.
        Dots indicate estimated means from $1000$ repetitions, while the shaded area indicates $\pm 1 \mathrm{SD}$. The Normal--Normal and the $t(5)$--$t(5)$ cases quickly converge to the theoretical predictions of $\xi^{-1}=0$ and $\xi^{-1} = 0.2,$ while the mixed scenarios converge marginally slower.
    } \label{fig:shape_finite_sample_convergence}
\end{figure}

\paragraph{Location and Scale Estimation.}
Next, we evaluate whether block maximum MLE accurately captures the location and scale parameters for empirical testing. Results are provided in \autoref{fig:gumbel_est} of the Appendix.
We find that bias-corrected estimation of the location parameter works well, while the scale estimate is consistent but exhibits a minor downward bias that disappears asymptotically \citep[consistent with known limitations of the MLE; see][]{dombry2019mle}.
For our goal of hypothesis testing, the overall distribution and its quantiles are recovered effectively.

\subsection{Applications}
We investigate several real-world datasets --- two applications from economics and biology, and four machine learning benchmarks --- and provide the first conclusive investigation of influence.

\paragraph{Economic Development and Geography.}
We re-examine the controversial finding that rugged terrain benefits African economies when compared to the rest of the world \citep{nunn2012Ruggedness}. \citet{kuschnig_hidden_2021} call the influence of the Seychelles, which remove significance of the estimate of interest when coupled with any of Rwanda, Lesotho, Eswatini, and the Comoros.
However, lacking the statistical framework that we provide, they were not able to test whether this level of influence should be deemed excessive or not.

We can now decisively resolve this controversy. \autoref{tab:rugged_infl} reveals the Seychelles as excessively influential on $\hat\theta_{\text{rugged}}$, both individually ($p < 0.001$) and in combination with other outliers, except for Lesotho.
This substantiates the suspected confounding from the size of nations \citep{kuschnig_hidden_2021}, lending statistical rigor to prior concerns and calling into question the differential relation between ruggedness and income in Africa.

\begin{table}[htbp] \centering
  \caption{Influence of Ruggedness on log(GDP per capita in 2000)} \label{tab:rugged_infl}
\begin{tabular}{@{\extracolsep{5pt}} lllll}
\toprule
Influential Set & $\Delta(\sS)$ & $\hat{\tilde{a}}$ & $\hat{b}$ & $p$-value \\
\midrule
Seychelles & $0.077$ & $0.020$ & $0.004$ & $<1e^{-16}$ \\
Seychelles + Lesotho & $0.046$ & $0.036$ & $0.007$ & $0.216$ \\
Seychelles + Rwanda & $0.070$ & $0.028$ & $0.006$ & $0.001$ \\
Seychelles + Eswatini & $0.091$ & $0.027$ & $0.006$ & $<1e^{-16}$ \\
Seychelles + Comoros & $0.061$ & $0.028$ & $0.006$ & $0.004$ \\
\bottomrule
\end{tabular}
\end{table}

\paragraph{Sparrow Morphology --- Big Heads and Beaks.}
We analyze the relation between head and tarsus length in saltmarsh sparrows, based on measurements of $N = 1,295$ sparrows with known outliers \citep{gjerdrum_egg_2008, zuur_protocol_2010}.
The baseline regression yields $\hat\theta = 0.011$ with a standard error of $(.030),$ implying a relation that is statistically indistinguishable from zero.

However, a curious data point moves the estimate to $0.219 (.029)$, turning the estimate significantly positive. An additional data point further moves the estimate to $0.288 (.032).$
These extreme impacts from a vanishing fraction of the sample are deemed excessive by our approach at any conventional significance level (both $p < 0.001$).\footnote{
    One possible explanation for this excessive influence are data entry errors:
    The first observation (an outlier in both head and tarsus size) may have the two (adjacent) features mixed up --- when swapped, they would fit well into overall averages. The second observation (an outlier in one feature) stands out with both values being equal up to the one significant digit.
}

\paragraph{Machine Learning Benchmarks.}
We apply our framework to four widely-used regression benchmarks: Law School, Adult Income, Boston Housing, and Communities \& Crime. For each dataset, we identify a most influential set of interest and test for excessive influence.

\begin{itemize}
    \item \textbf{Law School} ($N = 20,800$): We examine the coefficient for the `Other' race indicator, with $378$ relevant samples. We consider two sets: $77$ data points that move the estimate from $-0.0412\ (.0144)$ to $0.1117\ (.0159),$ creating a significant estimate with flipped sign, and $17$ data points that reduce the estimate to $-0.0223\ (.0097).$
    Our approach indicates that the larger set's influence falls within expected variation, while the smaller set exhibits statistically excessive influence ($p =0.019$).
    \item \textbf{Adult Income} ($N = 32,561$): We investigate the top 1\% most influential sets ($325$ points) that shift the `Male' indicator from $\hat\theta = 0.062,$ either raising it to $0.0992$ or decreasing it to $0.0214.$
    Despite these considerable shifts from a small fraction of the data, neither is deemed excessively influential by our approach.
    \item \textbf{Boston Housing} ($N = 506$): We focus on the effect of crime rate on house values. The baseline (highly significant) coefficient $-0.1080\ (.0329)$ is rendered insignificant at $-0.0352\ (.0556)$ after excluding just $6$ observations.
    In this case, the underlying EVD is Fréchet with inverse shape $\xi^{-1} = 0.29$ due to the heavy tail of the crime variable. The set's influence is highly significant ($p = 0.001$), indicating excessive influence.
    \item \textbf{Communities \& Crime} ($N = 1,994$): We investigate $2$ and $2$ data points with substantial influence on the relation between race and crime rates. The complete set is not extreme, as the points cancel each other out. After exclusion, the first subset of two increases the coefficient by more than 22\%, which is deemed excessive $p < 0.001.$ When re-estimating after their exclusion, the second set decreases the estimate by more than 10\% and is deemed excessive at the 5\% level ($p = 0.014$). (See \autoref{tab:crime_infl} for details.)
\end{itemize}

\section{Discussion} \label{sec:discussion}

We develop the first rigorous statistical framework for assessing when most influential sets represent genuine problems rather than natural sampling variation.
By deriving the extreme value distribution of maximal influence, we allow practitioners to replace ad-hoc sensitivity checks with principled statistical decision rules.

Our key insight is how maximal influence fundamentally depends on set size and tail behavior.
For constant-size sets with polynomial tails, maximal influence follows a heavy-tailed Fréchet distribution, implying that extreme influence can be arbitrarily large.
For growing sets or exponential-tailed data, maximal influence converges to a well-behaved Gumbel distribution.

These results address a critical gap in interpretable machine learning.
While recent work has developed methods to identify influential sets \citep{broderick_automatic_2021, freund_towards_2023, hu2024Most}, no formal theory existed to determine when their influence is excessive.
Our framework provides the long-missing theoretical foundation that enables rigorous statistical inference for influential observations and sets \citep[first discussed by][]{cook1979Influential}.

We can clarify the applicability of heuristics that are commonly used for identifying excessive influence, and provide conclusive answers when excessively influential sets are suspected \citep[such as the Seychelles in the ruggedness example; see][]{kuschnig_hidden_2021}.
The $2/\sqrt{N}$ threshold for coefficient influence \citep{belsley_regression_1980}, e.g., is asymptotically accurate for randomly selected observations, but is too restrictive for most influential observations, where the selection procedure necessitates extreme value theory.

\paragraph{Practical Recommendations.}
In general, most influential sets hold valuable information for inference. Our test is deliberately \emph{conservative}, controlling Type I errors (false claims of excessive influence) at the cost of some Type II errors (failing to detect truly excessive influence). This reflects our view that influential sets are a natural feature of data and not a problem to be eliminated.
When our test identifies an \emph{excessively influential set}, however, we recommend:
\begin{enumerate}
    \item \textbf{Investigate mechanism.} Document the set and investigate why it differs --- it may convey genuine heterogeneity, data quality issues, unobserved confounding, or important edge cases not addressed by the model.
    \item \textbf{Handle appropriately.} We argue that an excessively influential set warrants separate analysis; exclusion can be considered if it reflects measurement error or outliers that are irrelevant for the pattern of interest.
    \item \textbf{Report transparently.} State the set, decision, and test outcome; if conclusions hinge on the set, report both and discuss why. We advise against trimming or winsorizing to force alignment with the remaining data; these transformations create artificial data that may obscure and distort rather than illuminate underlying relations.
\end{enumerate}

\paragraph{Limitations and Future Work.}
Our analysis focuses on linear regression --- foundational for theory and modern ML methods, but limited to contexts where interpretability is valued \citep{rudin_stop_2019}.
Extensions to generalized linear models, tree-based methods, or non-parametric estimators require further developments.
Our asymptotic arguments leverage independence between features and residuals, which may be restrictive when dependence affects influence patterns. Explicitly addressing dependence between influence components through the selected set and across sets is possible through generalizations.
While simulations show that small-sample behavior quickly converges to asymptotic predictions (even at $N = 100$), further investigation of the theory-practice gap is warranted.

Practicality of our approach hinges on estimating the extreme value null, and, hence, approximating maximum influence.
Finite-sample estimation of tail behavior and EVD parameters can be delicate, and improving these estimators would directly sharpen $p$-values. Two options include leveraging domain-specific information and improved bias correction methods \citep{dombry2019mle, oorschot_abm_2020}.
Advances in finding influential sets remain an active research area \citep{hu2024Most, huang_approximations_2025}, and could substantially improve block-maxima, reduce runtime, and broaden applicability.

\paragraph{Broader Implications.}
Our framework enables more reliable decision-making across domains where linear models remain the method of choice. Principled tools for understanding data points that drive model behavior are crucial for building trustworthy systems.
Applications span fairness assessments --- where influential sets can reveal algorithmic bias --- to causal inference settings, such as randomized controlled trials or quasi-experimental econometric analyses where small data subsets can fundamentally alter estimates.

Importantly, we reframe influence as a natural feature of data requiring appropriate treatment rather than a problem to be fixed. Influential sets can represent genuine heterogeneity or important edge cases that should inform model development.
This perspective enables more nuanced approaches to data analysis, where information is preserved and assessed through principled statistical inference rather than discarded based on rules of thumb.

\section{Conclusion} \label{sec:conclusion}
We developed a statistical framework that transforms the assessment of most influential sets from art to science. By deriving the extreme value distributions of maximal influence, we enable rigorous hypothesis testing to distinguish excessive influence from natural variation. Applications across economics, biology, and machine learning benchmarks demonstrate the practical utility of our approach.

Our method offers clear guidance to practitioners --- when small sets overturn results of interest, our tests reveal whether this influence is statistically excessive. This enables more robust and transparent decision-making in settings where reliability matters, from medical trials to policy evaluation to algorithmic systems.
By providing theoretical foundations for influential set analysis, this work advances both the theory and practice of interpretable machine learning.

\section*{Reproducibility Statement}

Proofs are detailed in the Appendix, datasets are from the cited sources, and code to reproduce results are available at \url{https://github.com/konradld/testingMIS}.

\section*{Statement on LLM Use}

Large language models were used to (i) aid and polish writing, (ii) discover and retrieve related work, (iii) check results for apparent mistakes.

\bibliography{refs}
\bibliographystyle{iclr/iclr2026_conference}

%%%%%%%%%%%%%%%%%%%%%%%%%%%%%%%%%%%%%%%%%%%%%%%%%%%%%%%%%%%%

\clearpage
\appendix

% This appendix follows ICLR formatting guidelines and common academic practices

\appendix

% Reset equation, figure, and table counters for appendix
\setcounter{equation}{0}
\setcounter{figure}{0}
\setcounter{table}{0}
% \setcounter{theorem}{0}
% \setcounter{corollary}{0}
% \setcounter{lemma}{0}
% \setcounter{proposition}{0}

% Appendix numbering scheme (A.1, A.2, etc.)
% \renewcommand{\theequation}{A\arabic{equation}}
\renewcommand{\theequation}{\thesection.\arabic{equation}}
\renewcommand{\thefigure}{A\arabic{figure}}
\renewcommand{\thetable}{A\arabic{table}}
\renewcommand{\thesection}{A\arabic{section}}

\newpage

\section{Lemma for the Inverse Sum of Squares} \label{app:normality_inverse_sum_o_squares}

\begin{lemma}[Asymptotic Normality of Inverse Sum of Squares] \label{lem:normality_inverse_sum_o_squares}
    Let $\{X_i\}_{i=1}^\infty$ be a sequence of independent and identically distributed (i.i.d.) random variables satisfying:
    \begin{enumerate}
        \item $\E[X_1^4] < \infty$ (finite fourth moment)
        \item $\E[X_1^2] = \mu > 0$ (positive second moment)
        \item $\Var(X_1^2) = \sigma^2 > 0$ (non-degenerate variance of squares)
    \end{enumerate}
    Define $S_n = \sum_{i=1}^n X_i^2$ and $Y_n = S_n^{-1}$. Then $Y_n$ is asymptotically normal with:
    \[
        n^{3/2} \left( Y_n - \frac{1}{n\mu} \right) \xrightarrow{d} \mathcal{N}\left( 0, \frac{\sigma^2}{\mu^4} \right) \quad \text{as } n \to \infty.
    \]
\end{lemma}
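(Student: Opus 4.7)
The plan is to recognize that $S_n/n$ is a sample mean of the i.i.d. variables $X_i^2$, so a classical central limit theorem followed by the delta method with the smooth map $g(x) = 1/x$ will deliver the stated asymptotic normality. In particular, I would not try to expand $Y_n$ by a direct Taylor argument at the population level; instead I would reduce to a standard CLT setting first and then transfer through the reciprocal.

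The first step is to set $Z_i \coloneqq X_i^2$ so that $\mathbb{E}[Z_1] = \mu$ and $\mathrm{Var}(Z_1) = \sigma^2 \in (0, \infty)$, where the finite variance is exactly what the fourth moment assumption $\mathbb{E}[X_1^4] < \infty$ buys us. The classical Lindeberg--L\'evy CLT then gives
\begin{equation*}
    \sqrt{n}\left( \tfrac{S_n}{n} - \mu \right) \xrightarrow{d} \mathcal{N}(0, \sigma^2).
\end{equation*}
The second step is to apply the delta method to $g(x) = 1/x$ at the point $\mu > 0$, which is smooth on a neighbourhood of $\mu$ with $g'(\mu) = -1/\mu^2$. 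This yields
\begin{equation*}
    \sqrt{n}\left( \tfrac{n}{S_n} - \tfrac{1}{\mu} \right) \xrightarrow{d} \mathcal{N}\!\left( 0, \tfrac{\sigma^2}{\mu^4} \right),
\end{equation*}
since $(g'(\mu))^2 = 1/\mu^4$. The third step is a purely algebraic rearrangement: $n/S_n = n Y_n$, so factoring out $n$ gives $\sqrt{n}(n Y_n - 1/\mu) = n^{3/2}(Y_n - 1/(n\mu))$, which is the claim.

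I expect no real obstacle; the only points to double-check are (i) the positivity of $\mu$, which ensures $g$ is smooth at $\mu$ and justifies the delta method, and (ii) the finiteness of $\mathrm{Var}(Z_1)$, which requires $\mathbb{E}[X_1^4] < \infty$ and is needed for the CLT on $\{Z_i\}$. Both are assumed in the hypothesis. A minor technical remark worth adding in the written-out proof is that $S_n > 0$ almost surely for $n$ large (it is a.s.\ eventually positive since $\mu > 0$ forces $\mathbb{P}(X_1 \neq 0) > 0$), so $Y_n$ is well defined with probability tending to one, which suffices for a distributional limit.
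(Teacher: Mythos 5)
Your proposal is correct and follows essentially the same route as the paper's own proof: a CLT applied to the sample mean of $X_i^2$, the delta method with $g(x) = 1/x$ at $\mu$, and the algebraic identity $\sqrt{n}\,(nY_n - 1/\mu) = n^{3/2}\,(Y_n - 1/(n\mu))$. Your added remark that $S_n > 0$ with probability tending to one (so $Y_n$ is eventually well defined) is a small but welcome refinement the paper leaves implicit.
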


\begin{proof}
Define the sample mean of squares $\bar{X}_n^{(2)} = n^{-1}S_n$. By the Central Limit Theorem (CLT):
\[
\sqrt{n} \left( \bar{X}_n^{(2)} - \mu \right) \xrightarrow{d} \mathcal{N}(0, \sigma^2),
\]
where $\mu = \E[X_1^2]$ and $\sigma^2 = \Var(X_1^2)$ (finite by $\E[X_1^4] < \infty$).

Consider the transformation $g(x) = x^{-1}$, which is differentiable at $x = \mu > 0$ with derivative $g'(x) = -x^{-2}$. The Delta Method gives:
\[
\sqrt{n} \left( g(\bar{X}_n^{(2)}) - g(\mu) \right) \xrightarrow{d} \mathcal{N}\left(0, \sigma^2 \cdot [g'(\mu)]^2\right).
\]
Substituting $g(\bar{X}_n^{(2)}) = (\bar{X}_n^{(2)})^{-1} = n/S_n$ and $g(\mu) = \mu^{-1}$:
\[
\sqrt{n} \left( \frac{n}{S_n} - \frac{1}{\mu} \right) \xrightarrow{d} \mathcal{N}\left(0, \sigma^2 \cdot (-\mu^{-2})^2\right) = \mathcal{N}\left(0, \frac{\sigma^2}{\mu^4}\right).
\]

Rewriting $n/S_n = nY_n$:
\[
\sqrt{n} \left( nY_n - \mu^{-1} \right) \xrightarrow{d} \mathcal{N}\left(0, \frac{\sigma^2}{\mu^4}\right).
\]
Factoring the left side:
\begin{align*}
\sqrt{n} \left( nY_n - \mu^{-1} \right) 
&= n^{1/2} \cdot n \left( Y_n - \frac{1}{n\mu} \right) \\
&= n^{3/2} \left( Y_n - \frac{1}{n\mu} \right).
\end{align*}
Thus:
\[
n^{3/2} \left( Y_n - \frac{1}{n\mu} \right) \xrightarrow{d} \mathcal{N}\left(0, \frac{\sigma^2}{\mu^4}\right).
\]
\end{proof}

%=============================================================================
% Lemma for EVD of product of independent RV 
%=============================================================================
\newpage
\section{Lemmata for the Product EVD}
% \autoref{lem:evd_gumbel_x_frechet} and \autoref{lem:evd_frechet_x_frechet}}
\label{app:mult_obs_evd}

For notational simplicity let $S \coloneqq \sum_{i\in\sS} X_i\cdot R_i $ and $T \coloneqq D^{-1}_{-\sS},$ where $D = \sum_n X_n^2$. This holds for any realization $S=s\in\mathbb{R}$ and $T=t\in\mathbb{R}^+$.
Further, let $\mathrm{MDA}(H)$ denote the maximum domain of attraction of an EVD $H$ where we write $Z \in \mathrm{MDA}(H)$. We specifically denote the Fréchet as $\Phi_\alpha$ and the Gumbel as $\Lambda.$
We are interested in the EVD of $\Delta = S \cdot T.$

% \subsection{If $S \in \mathrm{MDA}(\Lambda)$}

\begin{lemma}\label{lem:evd_gumbel_x_frechet}
    Let $S\in\mathrm{MDA}(\Lambda)$ and $T \in \mathrm{MDA}(\Phi_a)$ with tail-coefficient $a > 0$ and $S$ and $T$ being independent, then $\Delta\left(\sS\right) = S \cdot T \in \mathrm{MDA}(\Phi_a).$
\end{lemma}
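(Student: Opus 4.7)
The plan is to apply Breiman's lemma, the classical tool for determining the tail of a product $S \cdot M$ when $S$ has a regularly varying heavy tail and $M$ is an independent lighter-tailed multiplier. Recall the equivalent characterization that $S \in \mathrm{MDA}(\Phi_a)$ holds if and only if $\bar F_S(x) = \mathbb{P}(S > x) = x^{-a} L(x)$ for some slowly varying function $L$. Since membership in $\mathrm{MDA}(\Phi_a)$ is determined by the upper tail alone, the goal reduces to showing that $\bar F_{ST}(x) = x^{-a} \tilde L(x)$ for some slowly varying $\tilde L$.

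First, I would verify the moment condition required by Breiman's lemma, namely $\mathbb{E}[T^{a + \epsilon}] < \infty$ for some $\epsilon > 0$. Positivity of $T$ is automatic in the intended application, where $T = D^{-1}_{-\sS}$ is the inverse of a sum of squares. The tail condition then follows from the fact that any distribution in $\mathrm{MDA}(\Lambda)$ with unbounded right endpoint is rapidly varying: $\bar F_T(x) = o(x^{-p})$ for every $p > 0$. In particular, every polynomial moment of $T$ is finite, so the moment requirement holds for all $\epsilon > 0$.

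Second, Breiman's lemma then gives
\[
    \bar F_{ST}(x) \sim \mathbb{E}[T^a] \, \bar F_S(x) = \mathbb{E}[T^a] \, x^{-a} L(x), \qquad x \to \infty,
\]
which is regularly varying with index $-a$. Hence $ST \in \mathrm{MDA}(\Phi_a)$ with the same tail index as $S$; the normalizing constants for the Fréchet limit are the standard ones, rescaled by a factor of $\mathbb{E}[T^a]^{1/a}$.

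The main obstacle is handling the possibility that $S$ can take negative values, since in the intended application $S = \sum_{i \in \sS} X_i R_i$ is not sign-definite. The standard remedy is to split $S = S^+ - S^-$ and observe that, because $T > 0$, the upper tail of $ST$ is tail-equivalent to that of $S^+ T$; since $S^+$ inherits the Fréchet index of $S$, Breiman's lemma still applies unchanged. A secondary care point is citing the rapid variation of Gumbel-domain tails from a standard extreme value theory source. Independence of $S$ and $T$ is assumed in the statement and is essential for Breiman's lemma; in the application it is justified by the disjoint-index construction of numerator and denominator.
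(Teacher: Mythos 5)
Your proposal is correct, but it reaches the conclusion by a genuinely different route than the paper. The paper proves the product-tail result from first principles: it writes $\mathbb{P}(S\,T>\delta)$ as an integral of the light-tailed factor's survival function against the regularly varying factor's density, splits the integral, substitutes $u=\delta/t$, and then uses Potter bounds and dominated convergence (its \autoref{lem:asym_equiv}) to pull out the slowly varying function and the constant $a\,C(a,S)$ --- in effect a self-contained derivation of exactly the special case of Breiman's lemma that you cite. You instead invoke Breiman's lemma directly and verify its hypotheses: the moment condition $\mathbb{E}[T^{a+\epsilon}]<\infty$ follows from rapid variation of Gumbel-domain tails when the right endpoint is infinite (and trivially when it is finite, since $T$ is then bounded), and positivity of the multiplier holds in the intended application where $T=D^{-1}_{-\sS}$. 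What your route buys is brevity, a clean reduction to a standard citable result, and an explicit treatment of the possible negativity of $S$ --- since $T>0$, $\mathbb{P}(S\,T>x)=\mathbb{P}(S^{+}T>x)$ and $S^{+}$ inherits the index of $S$ --- a point the paper handles only implicitly by restricting attention to the relevant positive region (its constant $a\,C(a,S)=a\int_0^\infty\mathbb{P}(S>u)u^{a-1}\,\mathrm{d}u$ is precisely the Breiman constant $\mathbb{E}[(S^{+})^{a}]$ of the light-tailed factor in its parametrization). What the paper's route buys is self-containedness and an explicit form of the slowly varying factor. One further note: the paper's written proof swaps the roles of $S$ and $T$ relative to the lemma statement (treating $S$ as Gumbel and $T$ as Fréchet), whereas your argument keeps the roles exactly as stated, which is the form actually used in \autoref{thm:stable_set_evd} and \autoref{thm:asymp_set_evd}.
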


\begin{proof}
    Recall that for Gumbel tails ($S$) the survival function decays faster than any polynomial, i.e.,
    \[
        \mathbb{P}(S > s) \sim \exp\left(-\frac{s - \mu}{\beta}\right) \quad \text{as } s \to \infty,
    \]
    while for the Fréchet tails ($T$) the survival function is regularly varying with index $-a$, i.e.,
    \[
        \mathbb{P}(T > t) \sim t^{-a} L_T(t) \quad \text{as } t \to \infty,
    \]
    where $L_T(t)$ is a slowly varying function.
    The density satisfies:
    \[
        f_T(t) \sim a t^{-a-1} L_T(t) \quad \text{as } t \to \infty.
    \]

    We are interested in the EVD of $\Delta$, i.e., $\mathbb{P}(\Delta > \delta)$.    
    Since $\Delta = S \cdot T$ and $S$, $T$ are independent by assumption:
    \[
        \mathbb{P}(\Delta > \delta) = \mathbb{P}(S \, T > \delta) = \int_{\mathbb{R}^+} \mathbb{P}(S > \delta/t) f_T(t) \,\mathrm{d}t.
    \]
    Next, we split the integral at $M > 0$:
    \[
        \mathbb{P}(\Delta > \delta) = 
            \underbrace{\int_0^M \mathbb{P}(S > \delta/t) f_T(t) \,\mathrm{d}t}_{I_1} + 
            \underbrace{\int_M^\infty \mathbb{P}(S > \delta/t) f_T(t)  \,\mathrm{d}t}_{I_2}.
    \]
    For fixed $M$, we have $I_1 \to 0,$ as $\delta \to \infty$ since $\delta/t \to \infty$ and Gumbel tails decay faster than any polynomial, and the dominant term is $I_2$.
    Substitute $u = \delta/t$ ($t = \delta/u$, $\,\mathrm{d}t = -(\delta/u^2) \,\mathrm{d}u$), and we have
    \[
        I_2 = \int_M^\infty \mathbb{P}(S > \delta/t) f_T(t) \,\mathrm{d}t =
            \int_0^{\delta/M} \mathbb{P}(S > u) f_T(\delta/u) \frac{\delta}{u^2} \,\mathrm{d}u.
    \]
    Using the asymptotic form of $f_T$:
    \[
        f_T(\delta/u) \sim a (\delta/u)^{-a-1} L_T(\delta/u),
    \]
    we obtain
    \begin{align*}
        I_2 \sim \int_0^{\delta/M} \mathbb{P}(S > u) 
            \left[ a \left(\frac{\delta}{u}\right)^{-a-1} L_T\left(\frac{\delta}{u}\right) \right]
            \frac{\delta}{u^2} \,\mathrm{d}u \\
        = a \delta^{-a} \int_0^{\delta/M} \mathbb{P}(S > u) u^{a-1} L_T\left(\frac{\delta}{u}\right) \,\mathrm{d}u.
    \end{align*}
    
    As $\delta \to \infty$, by \autoref{lem:asym_equiv} in \autoref{app:asym_equiv}, we obtain
    \begin{equation}\label{eq:change_of_limits}
        \int_0^{\delta/M} \mathbb{P}(S > u) u^{a-1} L_T\left(\frac{\delta}{u}\right) \,\mathrm{d}u \sim 
            L_T(\delta) \int_0^\infty \mathbb{P}(S > u) u^{a-1} \,\mathrm{d}u.
    \end{equation}
    
    The integral converges because:
    \begin{enumerate}
        \item near $u = 0$ we have $\mathbb{P}(S > u) \approx 1$ and $u^{a-1}$ is integrable for $a>0$, and
        \item as $u \to \infty$ the Gumbel decay dominates $u^{a-1}.$
    \end{enumerate}
    
    Denote the constant
    \[
        C(a, S) = \int_0^\infty \mathbb{P}(S > u) u^{a-1} \,\mathrm{d}u \in (0, \infty),
    \]
    then
    \[
        \mathbb{P}(\Delta > \delta) \sim 
            a \delta^{-a} L_T(\delta) C(a, S) = \delta^{-a} \left( a C(a, S) L_T(\delta) \right).
    \]
    The term in parentheses is slowly varying in $\delta$ since $L_T(\delta)$ is slowly varying.\\
    
    Thus, the survival function $\mathbb{P}(\Delta > \delta)$ is regularly varying with index $-a,$ and therefore, $\Delta$ has Fréchet tails with tail-coefficient $a$, which concludes the proof.
\end{proof}

\begin{corollary} \label{cor:evd_gumbel_gumbel}
    Following \autoref{lem:evd_gumbel_x_frechet} and assuming a tail coefficient $a = \infty$ it follows that $S \sim \text{Gumbel}$ and thus $\Delta\left(\sS\right) = S \cdot T \in \mathrm{MDA}(\Lambda).$
\end{corollary}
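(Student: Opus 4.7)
The plan is two-staged. First, I would justify the hypothesis ``tail coefficient $a = \infty$'' as the statement that $S$ has tails lighter than any polynomial. The Fréchet family $\Phi_a$ corresponds to extreme value index $\xi = 1/a$, so letting $a \to \infty$ pushes $\xi \to 0$, the Gumbel boundary of the generalised extreme value class. By the Fisher--Tippett--Gnedenko theorem this is the only non-degenerate limit consistent with a shape of $0$, so the claim $S \in \mathrm{MDA}(\Lambda)$ is automatic once the $a = \infty$ convention is accepted. The work therefore reduces to showing that a product of two independent $\mathrm{MDA}(\Lambda)$ factors is again in $\mathrm{MDA}(\Lambda)$.

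Second, I would adapt the integral decomposition used in \autoref{lem:evd_gumbel_x_frechet},
\[
    \mathbb{P}(\Delta > \delta) = \int_0^\infty \mathbb{P}(S > \delta/t)\, f_T(t)\, dt,
\]
splitting at a cut-off $M$ so that the bulk contribution comes from large $t$. The key change relative to the Fréchet case is that the tail of $T$ is no longer regularly varying; instead, I would use a von Mises representation $\mathbb{P}(T > t) \sim c\, \exp(-\int_{t_0}^t du/a_T(u))$ with self-neglecting auxiliary function $a_T$, and symmetrically for $S$. Passing to logarithms, $\log \Delta = \log S + \log T$, so the product becomes a sum and the question becomes whether $\mathrm{MDA}(\Lambda)$ is closed under convolution. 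Under the von Mises regularity that both factors inherit, this closure holds, with auxiliary function governed by the heavier of the two summand tails \citep{dehaan2006extreme}. Exponentiating back then yields $\Delta \in \mathrm{MDA}(\Lambda)$.

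The main obstacle is precisely the convolution-closure step: the general Gumbel-domain closure under addition requires regularity on the auxiliary functions of $\log S$ and $\log T$, and the $\mathrm{MDA}(\Lambda)$ class is not closed under convolution without additional structure (unlike the regularly varying case used in the lemma). Fortunately, $S = \sum_{i \in \sS} X_i R_i$ is a finite sum whose Gumbel regime is inherited from the individual tails $X_i R_i$, while $T = D^{-1}_{-\sS}$ is asymptotically normal by \autoref{lem:normality_inverse_sum_o_squares}; both possess well-behaved auxiliary functions under the hypotheses of \autoref{lem:evd_gumbel_x_frechet}. What remains is therefore verifying these regularity conditions --- a bookkeeping exercise rather than a new asymptotic calculation --- after which the conclusion $\Delta \in \mathrm{MDA}(\Lambda)$ follows from the closure argument above.
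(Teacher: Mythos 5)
Your route genuinely differs from the paper's. The paper's own proof is a one-line degeneration argument: it reads \autoref{lem:evd_gumbel_x_frechet} at the boundary case, using the fact that the Fréchet family $\Phi_a$ with $a \to \infty$ (extreme value index $\xi = 1/a \to 0$) degenerates to the Gumbel, so the lemma's conclusion $\mathrm{MDA}(\Phi_a)$ is simply read as $\mathrm{MDA}(\Lambda)$. Your first stage (interpreting $a=\infty$ as tails lighter than any polynomial, hence $S \in \mathrm{MDA}(\Lambda)$) is consistent with that. But your second stage replaces the paper's argument with a new one: prove directly that a product of two independent Gumbel-domain factors is Gumbel-attracted, via $\log \Delta = \log S + \log T$ and convolution closure under von Mises representations.

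That second stage contains the genuine gap: the closure step is precisely the content of the corollary, and you do not prove it — you assert it and defer the conditions to ``bookkeeping.'' As you yourself concede, $\mathrm{MDA}(\Lambda)$ is not closed under convolution (hence not under independent products after exponentiation), so the corollary's bare hypotheses — independence plus Gumbel attraction of each factor — are not sufficient in general; the additional regularity on the auxiliary functions is exactly what must be identified and verified, and the blanket statement ``closure holds, with auxiliary function governed by the heavier of the two summand tails'' is not a theorem in \citet{dehaan2006extreme}. The sufficient conditions are of Cline type (balance/self-neglect relations between the two auxiliary functions), and checking them for the concrete $S$ and $T$ is not routine: \autoref{lem:normality_inverse_sum_o_squares} gives asymptotic normality of the centered and scaled $T = D_{-\sS}^{-1}$, which describes its bulk, not a von Mises representation of its upper tail at fixed $N$ (that tail is governed by the behavior of $D_{-\sS}$ near zero and is in fact polynomial for fixed $N$), so the ``well-behaved auxiliary function'' for $\log T$ is never actually exhibited. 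A repair closer to the paper would be either (i) to redo the integral split in the proof of \autoref{lem:evd_gumbel_x_frechet} directly for this case, using rapid variation of the Gumbel-domain tail of $S$ (i.e.\ $\mathbb{P}(S > cu)/\mathbb{P}(S > u) \to 0$ for every $c > 1$) together with the concentration of $T$, or (ii) to make rigorous the $a \to \infty$ degeneration the paper gestures at, which requires justifying the interchange of the limits $\delta \to \infty$ and $a \to \infty$ rather than substituting $a = \infty$ into an asymptotic statement proved for fixed finite $a$.
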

\begin{proof}
    The result follows directly from properties of the Fréchet distribution.
\end{proof}
    
% \subsection{If $S \in \mathrm{MDA}(\Phi_a)$ (e.g. $X\sim t(\nu)$)}

\begin{lemma}\label{lem:evd_frechet_x_frechet}
    If $S \in \mathrm{MDA}(\Phi_a)$ and $T \in \mathrm{MDA}(\Phi_b)$ then $\Delta\left(\sS\right)\in\mathrm{MDA}(\Phi_{\min\{a,b\}})$.
\end{lemma}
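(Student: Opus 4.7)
The plan is to replay the integral-splitting argument from the proof of \autoref{lem:evd_gumbel_x_frechet}, now accounting for the polynomial (rather than double-exponential) decay of the $S$-tail. Without loss of generality assume $a \leqslant b$, so that $\min\{a, b\} = a$, and write
    \[
        \mathbb{P}(S > s) \sim s^{-a} L_S(s), \qquad \mathbb{P}(T > t) \sim t^{-b} L_T(t),
    \]
    with $L_S, L_T$ slowly varying. The goal is to exhibit a slowly varying $\tilde{L}$ such that $\mathbb{P}(\Delta > \delta) \sim \delta^{-a} \tilde{L}(\delta)$, which places $\Delta \in \mathrm{MDA}(\Phi_a)$.

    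First I would condition on $T$ and split at a cutoff $M > 0$:
    \[
        \mathbb{P}(ST > \delta) = \underbrace{\int_0^M \mathbb{P}(S > \delta/t)\, f_T(t)\, \mathrm{d}t}_{I_1} + \underbrace{\int_M^\infty \mathbb{P}(S > \delta/t)\, f_T(t)\, \mathrm{d}t}_{I_2}.
    \]
    For $I_1$, on compact subsets of $(0, M]$ the argument $\delta/t \to \infty$ uniformly, so the uniform convergence theorem for regularly varying functions (Potter's bounds) justifies replacing $\mathbb{P}(S > \delta/t)$ by $(\delta/t)^{-a} L_S(\delta)$, yielding
    \[
        I_1 \sim \delta^{-a} L_S(\delta) \cdot \mathbb{E}\!\left[ T^a \mathbf{1}_{\{T \leqslant M\}} \right].
    \]
    For $I_2$, I would substitute $u = \delta/t$ and invoke $f_T(t) \sim b t^{-b-1} L_T(t)$, exactly mirroring the change-of-variables step from \autoref{lem:evd_gumbel_x_frechet}, obtaining $I_2 \sim b \delta^{-b} \int_0^{\delta/M} \mathbb{P}(S > u)\, u^{b-1} L_T(\delta/u)\, \mathrm{d}u$.

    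When $a < b$, the Karamata estimate $\mathbb{E}[T^a] = a \int_0^\infty t^{a-1} \mathbb{P}(T > t)\, \mathrm{d}t < \infty$ (integrable at infinity because $a - 1 - b < -1$) lets me send $M \to \infty$, giving the Breiman-type asymptotic $I_1 \sim \mathbb{E}[T^a]\, \mathbb{P}(S > \delta)$, of order $\delta^{-a} L_S(\delta)$. Because $\delta^{-b}$ decays strictly faster than $\delta^{-a}$, $I_2 = o(I_1)$, so the tail of $\Delta$ is regularly varying with index $-a$ as required.

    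The main obstacle will be the boundary case $a = b$, where $\mathbb{E}[T^a]$ may diverge and both $I_1$ and $I_2$ contribute at the same polynomial order $\delta^{-a}$. Here I would argue symmetrically --- exchanging the roles of $S$ and $T$ in the analysis of $I_2$ and applying \autoref{lem:asym_equiv} to pass the slowly varying factor outside the integral --- so that each piece has the form $\delta^{-a}$ times a slowly varying function (typically picking up a $\log \delta$ correction from the convolution on the log-scale), and their sum retains this form. A secondary subtlety is handling $I_1$ near $t = 0$: truncating to $[\varepsilon, M]$ and absorbing the contribution on $(0, \varepsilon)$ into a controllable error term avoids any issue with possible blow-up of $f_T$ at the origin.
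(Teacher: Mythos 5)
Your proposal is correct in substance but takes a genuinely different route from the paper. The paper dispatches this lemma in one line by citing Lemma~1.3.1 of \cite{embrechts1997modelling} on closure of regularly varying tails, whereas you prove the product statement directly: condition on $T$, split at a cutoff $M$, and run a Breiman-type argument, mirroring the paper's own proof of \autoref{lem:evd_gumbel_x_frechet}. Your direct argument is arguably better matched to the claim, since the cited closure lemma concerns sums (convolutions) of regularly varying variables rather than products, so the reduction is not literally ``direct''; your sketch also makes explicit where the index $\min\{a,b\}$ comes from and why the boundary case $a=b$ may only contribute an extra slowly varying (e.g.\ logarithmic) factor, which still leaves $\Delta(\sS)\in\mathrm{MDA}(\Phi_a)$.

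One step does need repair: for fixed $M$ it is \emph{not} true that $I_2 = o(I_1)$. With $\mathbb{P}(S>u)\sim u^{-a}L_S(u)$ and $a<b$, the integral $\int_0^{\delta/M}\mathbb{P}(S>u)\,u^{b-1}L_T(\delta/u)\,\mathrm{d}u$ diverges as $\delta\to\infty$ at rate $(\delta/M)^{b-a}$ times a slowly varying factor, so $I_2$ is of order $M^{a-b}\,\delta^{-a}L(\delta)$ --- the same polynomial order as $I_1$, not a strictly faster-decaying one; comparing only the $\delta^{-b}$ prefactors is fallacious here (note also the range $t>\delta$, where $\mathbb{P}(S>\delta/t)$ is of order one and contributes at least $c\,\mathbb{P}(T>\delta)$). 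The fix is exactly the squeeze your sketch already gestures at: show $\mathbb{E}\bigl[T^a\mathbf{1}_{\{T\leqslant M\}}\bigr] \leqslant \liminf_\delta \mathbb{P}(ST>\delta)/\mathbb{P}(S>\delta)$ and $\limsup_\delta \mathbb{P}(ST>\delta)/\mathbb{P}(S>\delta) \leqslant \mathbb{E}\bigl[T^a\mathbf{1}_{\{T\leqslant M\}}\bigr] + C M^{a-b}$, then let $M\to\infty$ after $\delta\to\infty$. With that ordering of limits made explicit (and your $a=b$ treatment, which is fine), the argument is sound and yields regular variation of index $-\min\{a,b\}$ as required.
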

\begin{proof}
    The proof of this follows directly from Lemma 1.3.1 on the convolution closure of distribution functions with regularly varying tails in \cite{embrechts1997modelling}.
\end{proof}

% \begin{corollary}
%     Following \autoref{lem:evd_frechet_x_frechet}, if $X \sim t(\nu) \in \mathrm{MDA}(\Phi_\nu)$ and $R \in \mathrm{MDA}(\Phi_r)$ then $\E[\Delta^{\max}] = \infty.$
% \end{corollary}

% \begin{proof}
%     If $X \sim t(\nu) \in \mathrm{MDA}(\Phi_\nu)$ and $R \in \mathrm{MDA}(\Phi_r),$ it follows that $X \cdot R \in \mathrm{MDA}(\Phi_{\min\{\nu,r\}})$ and $D \in \mathrm{MDA}(\Phi_{\nu/2}),$ since $X^2\sim F(1,\nu) \in \mathrm{MDA}(\Phi_{\nu/2}).$ 
%     Further, $S = 1/D \in \mathrm{MDA}(\Phi_{1/2})$ meaning that $\Delta \coloneqq X \cdot R / D \in\mathrm{MDA}(\Phi_{d})$ with $d = \min\{\nu, b, 1/2\},$ which guarantees that $\E[\max\{\Delta\}]=\infty$ since $d \leqslant 1/2$.
% \end{proof}

\begin{corollary}[Conditional EVD]
    Further, if $S \in \mathrm{MDA}(E)$ for some EVD $E$, it holds that $$\Delta\left(\sS\right) \mid X_{-\sS} \in \mathrm{MDA}(E).$$
\end{corollary}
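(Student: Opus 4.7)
The plan is to exploit the fact that conditioning on $X_{-\sS}$ turns the denominator of $\Delta(\sS)$ into a deterministic quantity, reducing the problem to the preservation of the maximum domain of attraction under affine scaling. Specifically, once $X_{-\sS}$ is fixed, $T = D^{-1}_{-\sS} = \left(\sum_{n \notin \sS} X_n^2\right)^{-1}$ is a strictly positive constant, so that $\Delta(\sS) \mid X_{-\sS} = T \cdot S$ is just a deterministic positive rescaling of $S$. The first step I would take is to write this conditional identity explicitly and observe that $T > 0$ almost surely under the invertibility assumption on $\mathbf{X}'\mathbf{X}$ stated in Section~\ref{sec:approach}.

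Next, I would invoke the standard fact from extreme value theory that the maximum domain of attraction is invariant under strictly positive affine transformations: if $S \in \mathrm{MDA}(E)$ with normalizing sequences $(a_n, b_n)$ satisfying $\mathbb{P}\!\left( (M_n^S - b_n)/a_n \le x \right) \to E(x)$, then for any constant $c > 0$ the rescaled variable $cS$ lies in $\mathrm{MDA}(E)$ with normalizing sequences $(c\, a_n, c\, b_n)$. This is immediate from $M_n^{cS} = c\, M_n^S$ and a change of variables. I would cite this as a classical result (e.g., from Embrechts, Kl\"uppelberg and Mikosch, already referenced in the preceding lemma) rather than reproving it.

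Combining these two steps, conditional on $X_{-\sS}$ the variable $\Delta(\sS)$ is a positive constant multiple of $S$, and hence belongs to $\mathrm{MDA}(E)$, with conditional normalizing sequences $(T a_n, T b_n)$ determined by the realized value of $T$. The conclusion $\Delta(\sS) \mid X_{-\sS} \in \mathrm{MDA}(E)$ then follows directly.

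The main and really only subtle point is ensuring that the conditioning genuinely removes all randomness from $T$: one must check that $S = \sum_{i \in \sS} X_i R_i$ does not implicitly depend on $X_{-\sS}$ through the residuals. I would address this by noting that, in the framework of Section~\ref{sec:approach}, the residuals $R_i$ relevant for the influence formula in \autoref{prop:infl_sets} are the population residuals $Y_i - X_i \theta_0$ (or the out-of-sample residuals with respect to $\hat\theta_{-\sS}$), which are independent of $X_{-\sS}$ under the independence assumption invoked throughout the preceding lemmata. Once this independence is stated, the rest of the argument collapses to the scaling invariance above.
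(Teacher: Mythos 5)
Your argument is correct and is essentially the reasoning the paper leaves implicit: the corollary appears without a written proof, and the intended route is exactly yours --- conditioning on $X_{-\sS}$ fixes $T = D^{-1}_{-\sS}$ at a strictly positive constant, after which membership in $\mathrm{MDA}(E)$ is preserved because the maximum domain of attraction of each extreme value type is closed under multiplication by a positive constant (only the norming sequences change, as you note). Your closing caveat is also the right one to flag: the argument needs $S$ to be independent of $X_{-\sS}$ itself, which is slightly stronger than the independence of $S$ and $T$ assumed in \autoref{lem:evd_gumbel_x_frechet} and holds only if the residuals entering $S$ are population or leave-$\sS$-out residuals rather than full-sample OLS residuals; since the paper's own lemmata invoke the same (asymptotic) independence, your treatment is consistent with its framework.
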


%=============================================================================
% Helper Lemma for main Lemma for EVD of product of independent RV 
%=============================================================================
\newpage
\section{Lemma for Asymptotic Equivalence}
% Proof of \autoref{lem:asym_equiv}}
\label{app:asym_equiv}

\begin{lemma}[Asymptotic Equivalence Statement]
\label{lem:asym_equiv}
    \[
        \int_0^{\delta/M} \mathbb{P}(S > u) u^{a-1} L_T\left(\frac{\delta}{u}\right) \,\mathrm{d}u \sim L_T(\delta) \int_0^\infty \mathbb{P}(S > u) u^{a-1} \,\mathrm{d}u,
    \]
    where $S$ has Gumbel tails, $L_T$ is slowly varying, $a > 0$ is the tail coefficient, $M > 0$ is a fixed constant.
\end{lemma}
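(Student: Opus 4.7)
The plan is to divide both sides by $L_T(\delta)$ and prove the equivalent statement
\[
  \frac{1}{L_T(\delta)} \int_0^{\delta/M} \mathbb{P}(S > u)\, u^{a-1}\, L_T\!\left(\frac{\delta}{u}\right) \mathrm{d}u \;\longrightarrow\; \int_0^\infty \mathbb{P}(S > u)\, u^{a-1}\, \mathrm{d}u
\]
as $\delta \to \infty$, via the dominated convergence theorem applied to the integrand extended by zero outside $(0, \delta/M]$.

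First I would establish pointwise convergence: for every fixed $u > 0$, the indicator $\mathbf{1}_{\{u \leq \delta/M\}} \to 1$ and $\delta/u \to \infty$, so the definition of slow variation gives $L_T(\delta/u)/L_T(\delta) \to 1$. Next I would construct a dominating function via Potter's bounds, which yield a constant $C$ and a threshold $\delta_0$ such that for any fixed $\eta > 0$,
\[
  \frac{L_T(\delta/u)}{L_T(\delta)} \;\leq\; C \max\!\bigl(u^{-\eta},\, u^{\eta}\bigr)
\]
for all $\delta \geq \delta_0$ and all $u$ with $\delta/u \geq \delta_0$. Fixing any $\eta \in (0, a)$, the integrand is then dominated by $C\,\mathbb{P}(S > u)\, u^{a-1}\max(u^{-\eta}, u^{\eta})$. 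This bound is integrable on $(0,\infty)$: near zero, $\mathbb{P}(S > u) \leq 1$ and $u^{a-1-\eta}$ is integrable since $a - \eta > 0$; near infinity, the doubly-exponential Gumbel decay of $\mathbb{P}(S > u)$ absorbs the polynomial factor $u^{a-1+\eta}$. Applying dominated convergence then yields the limit, and the right-hand integral is finite by the same estimate, giving the claimed asymptotic equivalence.

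The hard part will be ensuring that the slow-variation estimate applies uniformly across the range $u \in (0, \delta/M]$. Plain uniform convergence $L_T(\delta x)/L_T(\delta) \to 1$ only holds on compact subsets of $(0, \infty)$, whereas with $x = 1/u$ the range stretches toward both $0$ and $\infty$ as $\delta$ grows. Potter's bound is the right tool precisely because it supplies a polynomial envelope valid on the entire positive axis, and that envelope meshes cleanly with the super-polynomial Gumbel tail to produce an integrable majorant. A minor bookkeeping point is that for $u$ very close to the upper limit $\delta/M$, the argument $\delta/u$ is bounded near $M$ rather than tending to infinity, so one should verify that Potter's bound still applies there (it does, by choosing $\delta_0 \leq M$ in the hypothesis above) and that this boundary region contributes a vanishing share of the integral under the Gumbel-weighted measure.
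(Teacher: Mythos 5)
Your proof is correct and follows essentially the same route as the paper's: pointwise convergence of $L_T(\delta/u)/L_T(\delta)\to 1$ plus a Potter-bound majorant and dominated convergence, merely packaged as a single DCT application with the envelope $C\max(u^{-\eta},u^{\eta})$ instead of the paper's explicit split into $I_1$ on $(0,1]$, $I_2$ on $[1,\delta/M]$, and the omitted tail $R(\delta)$. One caution: you cannot literally ``choose $\delta_0\leq M$'' (Potter's threshold is dictated by $L_T$, $\eta$, and the constant), but your fallback observation --- that the boundary region where $\delta/u$ stays bounded contributes only a Gumbel-suppressed, vanishing share relative to $L_T(\delta)$ --- is exactly the paper's Step 4 and closes that point.
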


\begin{proof}
    For clarity, we prove this result in five steps.
    
    \subsubsection*{Step 1: Integral Splitting}
    Define
    \[
        I(\delta) = \int_0^{\delta/M} \mathbb{P}(S > u) u^{a-1} L_T\left(\frac{\delta}{u}\right) \,\mathrm{d}u = I_1(\delta) + I_2(\delta),
    \]
    where
    \begin{align*}
        I_1(\delta) &= \int_0^1 \mathbb{P}(S > u) u^{a-1} L_T\left(\frac{\delta}{u}\right) \,\mathrm{d}u, \\
        I_2(\delta) &= \int_1^{\delta/M} \mathbb{P}(S > u) u^{a-1} L_T\left(\frac{\delta}{u}\right) \,\mathrm{d}u.
    \end{align*}
    
    \subsubsection*{Step 2: Analysis of $I_1(\delta)$ (Bounded Domain)}
    For $u \in (0,1],$ we have:
    \begin{align*}
        \lim_{\delta \to \infty} \frac{I_1(\delta)}{L_T(\delta)} 
            &= \lim_{\delta \to \infty} \int_0^1 \mathbb{P}(S > u) u^{a-1} \frac{L_T(\delta/u)}{L_T(\delta)} \,\mathrm{d}u \\
            &= \int_0^1 \mathbb{P}(S > u) u^{a-1} \,\mathrm{d}u,
    \end{align*}
    by the Dominated Convergence Theorem (DCT):
    \begin{itemize}
        \item \emph{Pointwise convergence:} for fixed $u > 0$, $\lim_{\delta \to \infty} \frac{L_T(\delta/u)}{L_T(\delta)} = 1.$
        \item \emph{Dominating function:} by Potter's theorem, for any $\delta > 0$, there exists $C_\delta > 0$ such that
        \[
            \left| \frac{L_T(\delta/u)}{L_T(\delta)} \right| \leqslant C_\delta u^{-\delta} \quad \text{for all large } \delta.
        \]
        Choose $\delta < a$ such that $u^{a-1-\delta}$ is integrable on $(0,1],$ then
        \[
            \left| \mathbb{P}(S > u) u^{a-1} \frac{L_T(\delta/u)}{L_T(\delta)} \right|
                \leqslant C_\delta u^{a-1-\delta} \quad (\text{since } \mathbb{P} \leqslant 1),
        \]
        and the dominating function $C_\delta u^{a-1-\delta}$ is integrable over $(0,1]$ for $a > \delta > 0.$
    \end{itemize}
    
    \subsubsection*{Step 3: Analysis of $I_2(\delta)$ (Growing Domain)}
    For $u \in [1, \delta/M],$ we have
    \begin{align*}
        \lim_{\delta \to \infty} \frac{I_2(\delta)}{L_T(\delta)} 
            &= \lim_{\delta \to \infty} \int_1^{\delta/M} \mathbb{P}(S > u) u^{a-1} \frac{L_T(\delta/u)}{L_T(\delta)} \,\mathrm{d}u \\
            &= \int_1^\infty \mathbb{P}(S > u) u^{a-1} \,\mathrm{d}u \quad \text{ by the DCT.}
    \end{align*}
    
    \begin{itemize}
        \item \emph{Pointwise convergence:} for fixed $u \geqslant 1$, $\lim_{\delta \to \infty} \frac{L_T(\delta/u)}{L_T(\delta)} = 1$
        \item \emph{Dominating function:} by Potter's theorem, for $\delta > 0$:
        \[
            \left| \frac{L_T(\delta/u)}{L_T(\delta)} \right| \leqslant C_\delta u^\delta \quad \text{for all large } \delta, u \geqslant 1.
        \]
        Choose $\delta$ such that $k = a-1+\delta > 0,$ and
        \[
            \int_1^\infty \mathbb{P}(S > u) u^{k} \,\mathrm{d}u < \infty,
        \]
        since the Gumbel decay dominates. Then
        \[
            \left| \mathbb{P}(S > u) u^{a-1} \frac{L_T(\delta/u)}{L_T(\delta)} \right|
                \leqslant C_\delta \mathbb{P}(S > u) u^{k},
        \]
        and the dominating function $C_\delta \mathbb{P}(S > u) u^{k}$ is integrable over $[1,\infty).$
        \item \emph{Tail control:} as $\delta \to \infty$, the upper limit $\delta/M \to \infty$ and
        \[
            \int_{\delta/M}^\infty C_\delta \mathbb{P}(S > u) u^{k} \,\mathrm{d}u \to 0.
        \]
    \end{itemize}
    
    \subsubsection*{Step 4: Negligibility of Omitted Tail}
    The tail beyond $\delta/M$ is negligible:
    \[
    R(\delta) = \int_{\delta/M}^\infty \mathbb{P}(S > u) u^{a-1} L_T\left(\frac{\delta}{u}\right) \,\mathrm{d}u.
    \]
    \begin{itemize}
        \item For $u \geqslant \delta/M$, we have $\delta/u \leqslant M,$ which is bounded on compact sets: $L_T(\delta/u) \leqslant C_M.$
        \item By the Gumbel tail properties, there exist a $\theta > 0$ s.t.\ $\mathbb{P}(S > u) \leqslant e^{-u^\theta}$ for large $u.$ Thus
        \[
            |R(\delta)| \leqslant C_M \int_{\delta/M}^\infty e^{-u^\theta} u^{a-1} \,\mathrm{d}u = o(1) \quad \text{as } \delta \to \infty.
        \]
        \item Since $L_T(\delta) \to \infty$ or is slowly varying, $R(\delta) = o(L_T(\delta))$
    \end{itemize}
    
    \subsubsection*{Step 5: Final Combination}
    Combining all results, we have
    \begin{align*}
        \frac{I(\delta)}{L_T(\delta)} 
            =& \frac{I_1(\delta) + I_2(\delta) + R(\delta)}{L_T(\delta)} \\
            =& \frac{I_1(\delta)}{L_T(\delta)} + \frac{I_2(\delta)}{L_T(\delta)} + o(1) \\
            \implies& \int_0^1 \mathbb{P}(S > u) u^{a-1} \,\mathrm{d}u + \int_1^\infty \mathbb{P}(S > u) u^{a-1} \,\mathrm{d}u \\
            =& \int_0^\infty \mathbb{P}(S > u) u^{a-1} \,\mathrm{d}u.
    \end{align*}
\end{proof}

\newpage

\section{Auxiliary Material} \label{app:material}

% \begin{figure}[hbtp]
%     \centering
%     \includegraphics[width=1\linewidth]{art/shape_parameter_analysis.pdf}
%     \caption{Visualization of the empirical estimation of EVD shape parameters. Most notable is the regime change between thin and polynomial tails. While the average MLE for the Fréchet cases is statistically insignificantly different from another, they all differ from the Gumbel case. This suggests that we can distinguish between both regimes in small samples.}
%     \label{fig:shape_est}
% \end{figure}

\begin{table}[htbp]
\centering
\begin{tabular}{llrrrrr}
\toprule
$N$ & Distribution & Mean & Std.Dev. & Q25 & Median & Q75 \\
\midrule
\multirow{4}{*}{20} 
& Normal--Normal & $0.1385$ & $0.2541$ & $-0.0206$ & $0.1293$ & $0.2902$ \\ 
& $t(5)$--Normal & $0.2417$ & $0.2865$ & $0.0863$ & $0.2329$ & $0.3994$ \\ 
& Normal--$t(5)$ & $0.2529$ & $0.2479$ & $0.0910$ & $0.2505$ & $0.4089$ \\ 
& $t(5)$--$t(5)$ & $0.3271$ & $0.2719$ & $0.1601$ & $0.3144$ & $0.4922$ \\ 
\midrule
\multirow{4}{*}{30} 
& Normal--Normal & $0.1021$ & $0.1803$ & $-0.0079$ & $0.1101$ & $0.2291$ \\ 
& $t(5)$--Normal & $0.1989$ & $0.1845$ & $0.0917$ & $0.1994$ & $0.3067$ \\ 
& Normal--$t(5)$ & $0.2023$ & $0.1846$ & $0.0817$ & $0.2101$ & $0.3225$ \\ 
& $t(5)$--$t(5)$ & $0.2670$ & $0.1912$ & $0.1453$ & $0.2668$ & $0.3961$ \\ 
\midrule
\multirow{4}{*}{50} 
& Normal--Normal & $0.0668$ & $0.1255$ & $-0.0153$ & $0.0631$ & $0.1466$ \\ 
& $t(5)$--Normal & $0.1708$ & $0.1304$ & $0.0890$ & $0.1701$ & $0.2532$ \\ 
& Normal--$t(5)$ & $0.1843$ & $0.1286$ & $0.1015$ & $0.1797$ & $0.2704$ \\ 
& $t(5)$--$t(5)$ & $0.2307$ & $0.1363$ & $0.1420$ & $0.2305$ & $0.3203$ \\ 
\midrule
\multirow{4}{*}{75} 
& Normal--Normal & $0.0474$ & $0.0960$ & $-0.0163$ & $0.0485$ & $0.1176$ \\ 
& $t(5)$--Normal & $0.1500$ & $0.1006$ & $0.0821$ & $0.1469$ & $0.2173$ \\ 
& Normal--$t(5)$ & $0.1735$ & $0.1057$ & $0.1037$ & $0.1749$ & $0.2434$ \\ 
& $t(5)$--$t(5)$ & $0.2214$ & $0.1046$ & $0.1561$ & $0.2213$ & $0.2885$ \\ 
\midrule
\multirow{4}{*}{100} 
& Normal--Normal & $0.0422$ & $0.0788$ & $-0.0115$ & $0.0429$ & $0.0959$ \\ 
& $t(5)$--Normal & $0.1389$ & $0.0846$ & $0.0790$ & $0.1369$ & $0.1948$ \\ 
& Normal--$t(5)$ & $0.1721$ & $0.0865$ & $0.1183$ & $0.1731$ & $0.2291$ \\ 
& $t(5)$--$t(5)$ & $0.2083$ & $0.0876$ & $0.1492$ & $0.2056$ & $0.2639$ \\ 
\midrule
\multirow{4}{*}{200} 
& Normal--Normal & $0.0253$ & $0.0506$ & $-0.0075$ & $0.0253$ & $0.0582$ \\ 
& $t(5)$--Normal & $0.1418$ & $0.0587$ & $0.1020$ & $0.1403$ & $0.1833$ \\ 
& Normal--$t(5)$ & $0.1655$ & $0.0576$ & $0.1272$ & $0.1632$ & $0.2013$ \\ 
& $t(5)$--$t(5)$ & $0.2072$ & $0.0595$ & $0.1691$ & $0.2054$ & $0.2450$ \\ 
\midrule
\multirow{4}{*}{300} 
& Normal--Normal & $0.0169$ & $0.0406$ & $-0.0096$ & $0.0180$ & $0.0455$ \\ 
& $t(5)$--Normal & $0.1423$ & $0.0461$ & $0.1122$ & $0.1446$ & $0.1731$ \\ 
& Normal--$t(5)$ & $0.1705$ & $0.0463$ & $0.1377$ & $0.1695$ & $0.2006$ \\ 
& $t(5)$--$t(5)$ & $0.2082$ & $0.0511$ & $0.1757$ & $0.2082$ & $0.2406$ \\ 
\midrule
\multirow{4}{*}{500} 
& Normal--Normal & $0.0157$ & $0.0319$ & $-0.0049$ & $0.0154$ & $0.0381$ \\ 
& $t(5)$--Normal & $0.1488$ & $0.0378$ & $0.1245$ & $0.1499$ & $0.1739$ \\ 
& Normal--$t(5)$ & $0.1667$ & $0.0354$ & $0.1419$ & $0.1662$ & $0.1903$ \\ 
& $t(5)$--$t(5)$ & $0.2087$ & $0.0385$ & $0.1833$ & $0.2085$ & $0.2323$ \\ 
\midrule
\multirow{4}{*}{1000} 
& Normal--Normal & $0.0103$ & $0.0220$ & $-0.0042$ & $0.0109$ & $0.0243$ \\ 
& $t(5)$--Normal & $0.1591$ & $0.0287$ & $0.1408$ & $0.1589$ & $0.1771$ \\ 
& Normal--$t(5)$ & $0.1734$ & $0.0252$ & $0.1565$ & $0.1743$ & $0.1905$ \\ 
& $t(5)$--$t(5)$ & $0.2105$ & $0.0262$ & $0.1938$ & $0.2101$ & $0.2273$ \\
\bottomrule
\end{tabular}
\caption{
    Inverse shape estimates for different distributions and sample sizes. They indicate fast convergence to asymptotic predictions ($0.0$ for the Normal-Normal, and $0.2$ for other cases) --- for all settings, the predicted value is contained between the 25\textsuperscript{th} and 75\textsuperscript{th} quantile. Estimates are based on 1,000 repetitions.
} \label{tab:full_shape_simulation}
\end{table}

\begin{figure}
    \centering
    \includegraphics[width=1\linewidth]{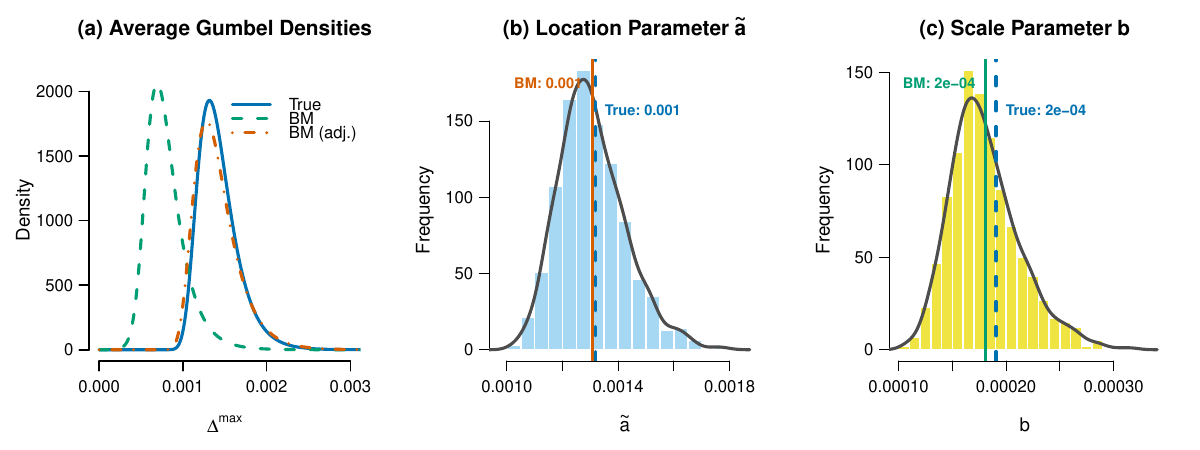}
    \caption{Simulation exercise for the performance of the simple MLE based on block maxima, correcting for block size.  While the corrected location parameter $\hat{\tilde{a}}$ is close to unbiased, the scale parameter $\hat{b}$ suffers some downward bias using simple block maxima, which is in line with \cite{dombry2019mle}. However, for practical purposes the block maxima are expected to be fitting reasonably well, as visible in panel (a).}
    \label{fig:gumbel_est}
\end{figure}

\begin{table}[htbp] \centering 
\begin{tabular}{@{\extracolsep{2pt}} lccccc} 
\toprule
Set Composition & Set Size & $\Delta(\sS)$ & $\hat{\tilde{a}}$ & $\hat{b}$ & $p$-value \\ 
\midrule 
Full set & $4$ & $0.0214$ & $0.0076$ & $0.0029$ & $0.4914$ \\ 
1st partial & $2$ & $0.0456$ & $0.0050$ & $0.0021$ & $7.62e^{-7}$ \\ 
2nd after excl. 1st & $2$ & $-0.0241$ & $0.0051$ & $0.0022$ & $0.0141$ \\ 
\bottomrule
\end{tabular} 
  \caption{
    Influence of \% Black Population on Violent Crimes.
    The table summarizes the results for testing the preselected set and its subsets for significant influence of the percent of black population on the violent crimes committed per population.
  } 
  \label{tab:crime_infl} 
\end{table}

%%%%%%%%%%%%%%%%%%%%%%%%%%%%%%%%%%%%%%%%%%%%%%%%%%%%%%%%%%%%

\end{document}